\def\eqref#1{equation~\ref{#1}}
\def\1{\bm{1}}
\def\vv{{\bm{v}}}
\def\mY{{\bm{Y}}}
\DeclareMathAlphabet{\mathsfit}{\encodingdefault}{\sfdefault}{m}{sl}
\SetMathAlphabet{\mathsfit}{bold}{\encodingdefault}{\sfdefault}{bx}{n}
\def\gA{{\mathcal{A}}}
\def\gD{{\mathcal{D}}}
\def\gE{{\mathcal{E}}}
\def\gG{{\mathcal{G}}}
\def\gH{{\mathcal{H}}}
\def\gL{{\mathcal{L}}}
\def\gM{{\mathcal{M}}}
\def\gS{{\mathcal{S}}}
\def\gX{{\mathcal{X}}}
\newcommand{\E}{\mathbb{E}}
\DeclareMathOperator*{\argmax}{arg\,max}
\newcommand{\soft}{\mathrm{soft}}
\pgfplotsset{compat=1.17}
\definecolor{mydarkblue}{rgb}{0,0.08,0.45}
\theoremstyle{plain}
\newtheorem{theorem}{Theorem}[section]
\newtheorem{proposition}[theorem]{Proposition}
\newtheorem{corollary}[theorem]{Corollary}
\theoremstyle{definition}
\theoremstyle{remark}
\title{Discrete Probabilistic Inference as Control\\in Multi-path Environments}
\def\thanks#1{\protected@xdef\@thanks{\@thanks\protect\footnotetext{#1}}}
\renewcommand\AB@affilsepx{\hspace{1em}\protect\Affilfont}
\author[1,3]{Tristan~Deleu\thanks{Correspondence to: Tristan Deleu <\href{mailto:deleutri@mila.quebec}{deleutri@mila.quebec}>}\thanks{Code: \href{https://github.com/tristandeleu/gfn-maxent-rl}{https://github.com/tristandeleu/gfn-maxent-rl}}}
\author[2]{Padideh~Nouri}
\author[1]{Nikolay~Malkin}
\author[2,4]{Doina~Precup}
\author[1]{Yoshua~Bengio}
\affil[ ]{\protect\hspace*{-1.3em}Mila -- Quebec AI Institute\protect\\[1em]}
\affil[1]{Universit\'{e} de Montr\'{e}al}
\affil[2]{McGill University}
\affil[3]{Valence Labs}
\affil[4]{Google DeepMind}
\begin{document}
\maketitle

\begin{abstract}
  We consider the problem of sampling from a discrete and structured distribution as a sequential decision problem, where the objective is to find a stochastic policy such that objects are sampled at the end of this sequential process proportionally to some predefined reward. While we could use maximum entropy Reinforcement Learning (MaxEnt RL) to solve this problem for some distributions, it has been shown that in general, the distribution over states induced by the optimal policy may be biased in cases where there are multiple ways to generate the same object. To address this issue, Generative Flow Networks (GFlowNets) learn a stochastic policy that samples objects proportionally to their reward by approximately enforcing a conservation of flows across the whole Markov Decision Process (MDP). In this paper, we extend recent methods correcting the reward in order to guarantee that the marginal distribution induced by the optimal MaxEnt RL policy is proportional to the original reward, regardless of the structure of the underlying MDP. We also prove that some flow-matching objectives found in the GFlowNet literature are in fact equivalent to well-established MaxEnt RL algorithms with a corrected reward. Finally, we study empirically the performance of multiple MaxEnt RL and GFlowNet algorithms on multiple problems involving sampling from discrete distributions.
\end{abstract}

\section{Introduction}
\label{sec:introduction}
Approximate probabilistic inference has seen a tremendous amount of progress, notably from a variational perspective coupled with deep neural networks. This is particularly true for continuous sample spaces, where the Evidence Lower Bound (ELBO) can be maximized with gradient methods, thanks to methods such as pathwise gradient estimation \citep{kingma2013vae,rezende2014dlgm,rezende2015nf}. In the case of discrete and highly structured sample spaces though, this ``reparametrization trick'' becomes more challenging since it often requires continuous relaxations of discrete distributions \citep{jang2017gumbelsoftmax,maddison2017gumbelsoftmax,maddison2018gumbelsinkhorn}. In those cases, variational inference can also be carried out more generally thanks to score function estimation \citep{williams1992reinforce}, albeit at the expense of high variance \citep{mohamed2020montecarlogradient}. An alternative for approximate inference is through sampling methods, based on Markov chain Monte Carlo (MCMC; \citealp{hastings1970mcmc,gelfand1990mcmc}), when the target distribution is defined up to an intractable normalization constant.

\citet{bengio2021gflownet} introduced a new class of probabilistic models called \emph{Generative Flow Networks} (GFlowNets), to approximate an unnormalized target distribution over discrete and structured sample spaces from a variational perspective \citep{malkin2022gfnhvi,zimmermann2022vigfn}. GFlowNets treat sampling as a sequential decision making problem, heavily inspired by the literature in Reinforcement Learning (RL). Unlike RL though, which seeks an optimal policy maximizing the cumulative reward, the objective of a GFlowNet is to find a policy such that objects can be sampled proportionally to their cumulative reward. Nevertheless, this relationship led to a number of best practices from the RL literature being transferred into GFlowNets, such as the use of a replay buffer \citep{shen2023understandingtraininggfn,vemgal2023gfnreplaybuffer} and target network \citep{deleu2022daggflownet}, and advanced exploration strategies \citep{rectorbrooks2023thompsongfn}.

Although \citet{bengio2021gflownet} proved that GFlowNets were exactly equivalent to maximum entropy RL (MaxEnt RL; \citealp{ziebart2010maxent}) in some specific cases which we will recall in \cref{sec:sampling-sequential-decision-making}, it has long been thought that this connection was only superficial in general. However recently, \citet{tiapkin2023gfnmaxentrl} showed that GFlowNets and MaxEnt RL are in fact one and the same, up to a correction of the reward function. This, along with other recent works \citep{mohammadpour2023maxentgfn,anonymous2023gfnpolicygradient}, paved the way to show deeper connections between GFlowNets and MaxEnt RL algorithms. In this work, we extend this correction of the reward to a more general case, and establish novel equivalences between GFlowNet \& MaxEnt RL objectives, notably between the widely used Trajectory Balance loss in the GFlowNet literature \citep{malkin2022trajectorybalance}, and the Path Consistency Learning algorithm in MaxEnt RL \citep{nachum2017pcl}. We also introduce a variant of the Soft Q-Learning algorithm \citep{haarnoja2017sql}, depending directly on a policy, and show that it becomes equivalent to the Modified Detailed Balance loss of \citet{deleu2022daggflownet}. Finally, we show these similarities in behavior empirically on three different domains, and include the popular Soft Actor-Critic algorithm (SAC; \citealp{haarnoja2018sac}) in our evaluations, which has no existing GFlowNet counterpart, under similar conditions on the same domains.

\section{Marginal sampling via sequential decision making}
\label{sec:sampling-sequential-decision-making}
Given an energy function $\gE(x)$ defined over a discrete and finite sample space $\gX$, our objective is to sample objects $x \in \gX$ from the \emph{Gibbs distribution}:
\begin{equation}
    P(x) \propto \exp(-\gE(x) / \alpha),
    \label{eq:gibbs-distribution}
\end{equation}
where $\alpha > 0$ is a temperature parameter. We assume that this energy function is fixed and we can query it for any element in $\gX$. Sampling from this distribution is in general a challenging problem due to the partition function $Z = \sum_{x\in\gX}\exp(-\gE(x)/\alpha)$ acting as the normalization constant for $P$, which is intractable when the sample space is (combinatorially) large. Throughout this paper, we will focus on cases where the objects of interest have some compositional structure (\emph{e.g.}, graphs, or trees), meaning that they can be constructed piece by piece.

\subsection{Maximum entropy Reinforcement Learning}
\label{sec:maxent-rl}
We consider a finite-horizon Markov Decision Process (MDP) $\gM_{\soft} = (\gS, \gA, s_{0}, T, r)$, where the state space $\gS$ and the action space $\gA$ are discrete and finite. We assume that this MDP is deterministic, with a transition function $T: \gS \times \gA \rightarrow \bar{\gS}$ that determines how to move to a new state $s' = T(s, a)$ from the state $s$, following the action $a$. We identify an initial state $s_{0} \in \gS$ from which all the trajectories start. Moreover, since we are in a finite-horizon setting, all these trajectories are finite and we assume that they eventually end at an abstract terminal state $s_{f} \notin \gS$ acting as a ``sink'' state; we use the notation $\bar{\gS} = \gS \cup \{s_{f}\}$. The state space is defined as a superset of the sample space $\gX \subseteq \gS$, and is structured in such a way that $x \in \gX$ iff we can transition from $x$ to the terminal state $s_{f}$ (\emph{i.e.}, there exists an action $a \in \gA$ such that $T(x, a) = s_{f}$); the states $x \in \gX$ are called \emph{terminating states}, following the naming convention of \citet{bengio2023gflownetfoundations}. We also set the discount factor $\gamma = 1$ throughout this paper. This setting with known deterministic dynamics is well-studied in the Reinforcement Learning literature \citep{todorov2006linearlysolvablemdp,kappen2012optimalcontrolgraphicalmodel}.

Since the MDP is deterministic, we can identify the action $a$ leading to a state $s' = T(s, a)$ with the transition $s \rightarrow s'$ in the state space itself. As such, we will write all quantities involving state-action pairs $(s, a)$ in terms of $(s, s')$ instead. The reward function $r(s, s')$ is defined such that the sum of rewards along a complete trajectory (the return) only depends on the energy of the terminating state it reaches: for a trajectory $\tau = (s_{0}, s_{1}, \ldots, s_{T}, s_{f})$, we have
\begin{equation}
    \sum_{t=0}^{T}r(s_{t}, s_{t+1}) = -\gE(s_{T}),
    \label{eq:reward-function-soft-mdp}
\end{equation}
with the convention $s_{T+1} = s_{f}$. In particular, this covers the case of a sparse reward that is received only at the end of the trajectory (\emph{i.e.}, $r(s_{T}, s_{f}) = -\gE(s_{T})$, and zero everywhere else). This decomposition of the energy into intermediate rewards is similar to \citet{buesing2020approximate}.

While the objective of Reinforcement Learning is typically to find a policy $\pi(s_{t+1}\mid s_{t})$ maximizing the expected sum of rewards (here corresponding to finding a state $x\in \gX$ with lowest energy, or equivalently a mode of $P$ in \cref{eq:gibbs-distribution}), in \emph{maximum entropy Reinforcement Learning} (MaxEnt RL) we also search for a policy that maximizes the expected sum of rewards, but this time augmented with the entropy $\gH(\pi(\cdot\mid s))$ of the policy $\pi$ in state $s$:
\begin{align}
    \pi^{*}_{\mathrm{MaxEnt}} = \argmax_{\pi}\E_{\tau}\Bigg[\sum_{t=0}^{T} r(s_{t}, s_{t+1}) + \alpha \gH(\pi(\cdot \mid s_{t}))\Bigg],\nonumber\\
    \label{eq:maxent-rl-problem}
\end{align}
where the expectation is taken over complete trajectories $\tau$ sampled following the policy $\pi$. Intuitively, adding the entropy to the objective encourages stochasticity in the optimal policy and improves diversity. To highlight the difference with the standard objective in RL, $\gM_{\soft}$ will be called \emph{soft MDP}, following the nomenclature of \citet{ziebart2010maxent}.

\paragraph{Notations.} In what follows, it will be convenient to view the transitions in the soft MDP as a directed acyclic graph (DAG) $\gG$ over the states in $\bar{\gS}$ (including $s_{f}$), rooted in $s_{0}$; the graph is acyclic to ensure that $\gM_{\soft}$ is finite-horizon. We will use the notations $\mathrm{Pa}(s)$ and $\mathrm{Ch}(s)$ to denote respectively the parents and the children of a state $s$ in $\gG$. For any terminating state $x \in \gX$, $s_{0} \rightsquigarrow x$ denotes a complete trajectory in $\gG$ of the form $\tau = (s_{0}, s_{1}, \ldots, x, s_{f})$; the transition to the terminal state is implicit in the notation, albeit necessary in our definition.

\subsection{Sampling terminating states from a soft MDP}
\label{sec:sampling-terminating-states}
From the literature on \emph{control as inference} \citep{ziebart2008maxentirl,levine2018controlasinference}, it can be shown that the policy maximizing the MaxEnt RL objective in \cref{eq:maxent-rl-problem} induces a distribution over dynamically consistent trajectories $\tau = (s_{0}, s_{1}, \ldots, s_{T}, s_{f})$ that depends on the sum of rewards along the trajectory:
\begin{align}
    \begin{adjustbox}{center}
    $\displaystyle\pi^{*}(\tau) \triangleq \prod_{t=0}^{T}\pi^{*}_{\mathrm{MaxEnt}}(s_{t+1}\mid s_{t}) \propto \exp\left(\frac{1}{\alpha}\sum_{t=0}^{T}r(s_{t}, s_{t+1})\right).$
    \end{adjustbox}\nonumber\\
    \label{eq:distribution-trajectories}
\end{align}
With our choice of reward function in \cref{eq:reward-function-soft-mdp}, this suggests a simple strategy to sample from the Gibbs distribution in \cref{eq:gibbs-distribution}, once the optimal policy $\pi^{*}_{\mathrm{MaxEnt}}$ is known: sample a trajectory in $\gM_{\soft}$, starting at the initial state $s_{0}$ and following the optimal policy, and only return the terminating state $s_{T}$ reached at the end of $\tau$ (ignoring the terminal state $s_{f}$). However, in general, we are interested in sampling an object $x \in \gX$ (the terminating state) using this sequential process, but not \emph{how} this object is generated (the exact trajectory taken). The distribution of interest is therefore not a distribution over \emph{trajectories} as in \cref{eq:distribution-trajectories}, but its \emph{marginal} over terminating states:
\begin{equation}
    \pi^{*}(x) \triangleq \sum_{\tau:s_{0}\rightsquigarrow x}\pi^{*}(\tau).
    \label{eq:terminating-state-probability}
\end{equation}
$\pi^{*}(x)$ is called the \emph{terminating state distribution} associated with the policy $\pi^{*}_{\mathrm{MaxEnt}}$ \citep{bengio2023gflownetfoundations}. When the state transition graph $\gG$ of the soft MDP is a tree\footnote{With the exception of the terminal state $s_{f}$, whose parents are always all the terminating states in $\gX$.} rooted in $s_{0}$, and there is a unique complete trajectory $s_{0} \rightsquigarrow x$ leading to any $x \in \gX$, then this process is guaranteed to be equivalent to sampling from the Gibbs distribution \citep{bengio2021gflownet}. Examples of tree-structured MDPs include the autoregressive generation of sequences \citep{bachman2015datagensequential,weber2015virl,angermueller2020dynappo,jain2022gfnbiological,feng2022metarlbo}, and sampling from a discrete factor graph with a fixed ordering of the random variables \citep{buesing2020approximate}.

\begin{figure}[t]
    \centering
    \includegraphics[width=\columnwidth]{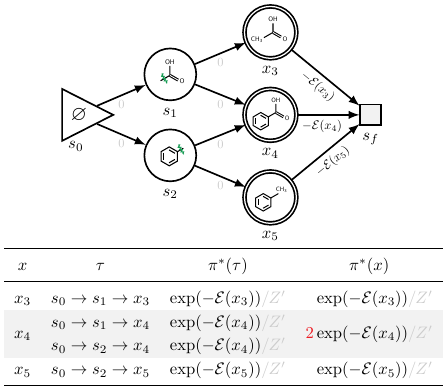}
    \caption{Illustration of the bias of the terminating state distribution associated with $\pi^{*}_{\mathrm{MaxEnt}}$ on a soft MDP with a DAG structure. The labels on each transition of the MDP corresponds to the reward function, satisfying \cref{eq:reward-function-soft-mdp} (sparse reward setting). The terminating state distribution $\pi^{*}(x)$ is computed by marginalizing $\pi^{*}(\tau)$ over trajectories leading to $x$ (\emph{e.g.}, two trajectories $s_{0} \rightarrow s_{1} \rightarrow x_{4}$ and $s_{0} \rightarrow s_{2} \rightarrow x_{4}$ to $x_{4}$). $\pi^{*}(\tau)$ is computed based on \cref{eq:distribution-trajectories}, and we assume $\alpha = 1$. The terminating state distribution $\pi^{*}(x)$ should be contrasted with the (target) Gibbs distribution $P(x) \propto \exp(-\gE(x))$. The normalization constant is $Z' = \exp(-\gE(x_{3})) + 2\exp(-\gE(x_{4})) + \exp(-\gE(x_{5}))$. This MDP is inspired by \citet{jain2023gfnscientific}.}
    \label{fig:toy-gflownet}
\end{figure}

The equivalence between control and inference (sampling from \cref{eq:gibbs-distribution}) no longer holds when $\gG$ is a general DAG though. As an illustrative example, shown in \cref{fig:toy-gflownet}, consider the problem of generating small molecules by adding fragments one at a time, as described in \citet{you2018graphgenrl}. If we follow the optimal policy, even though the distribution over trajectories is proportional to $\exp(-\gE(x))$ as in \cref{eq:distribution-trajectories}, its marginal over terminating states is biased since there are two trajectories leading to the same molecule $x_{4}$ (\emph{i.e.}, multiple orders in which fragments can be added). This bias was first highlighted by \citet{bengio2021gflownet}.

\subsection{Generative Flow Networks}
\label{sec:gflownets}
To address this mismatch between the target Gibbs distribution in \cref{eq:gibbs-distribution} and the terminating state distribution in \cref{eq:terminating-state-probability} induced by the optimal policy in MaxEnt RL, \citet{bengio2021gflownet} introduced a new class of probabilistic models over discrete and compositional objects called \emph{Generative Flow Networks} (GFlowNets; \citealp{bengio2023gflownetfoundations}). Instead of searching for a policy maximizing \cref{eq:maxent-rl-problem}, the goal of a GFlowNet is to find a \emph{flow} function $F(s \rightarrow s')$ defined over the edges of $\gG$ that satisfies the following flow-matching conditions for all states $s' \in \gS$ such that $s' \neq s_{0}$:
\begin{equation}
    \sum_{s\in\mathrm{Pa}(s')} F(s \rightarrow s') = \sum_{s''\in\mathrm{Ch}(s')}F(s'\rightarrow s''),
    \label{eq:flow-matching-condition}
\end{equation}
with an additional boundary condition for each terminating state $x\in\gX$: $F(x \rightarrow s_{f}) = \exp(-\gE(x)/\alpha)$. Putting it in words, \cref{eq:flow-matching-condition} means that the total amount of flow going into any state $s'$ has to be equal to the amount of flow going out of it, except for the initial state $s_{0}$ which acts as a single ``source'' for all the flow. We can define a policy\footnote{Following the conventions from the GFlowNet literature, we will use the notation $P_{F}$ for this policy, also called the forward transition probability \citep{bengio2023gflownetfoundations}, to distinguish it from the optimal policy $\pi^{*}_{\mathrm{MaxEnt}}$ in MaxEnt RL.} from a flow function by simply normalizing the flows going out of any state $s_{t}$:
\begin{equation}
    P_{F}(s_{t+1}\mid s_{t}) \propto F(s_{t} \rightarrow s_{t+1}).
    \label{eq:forward-transition-probability}
\end{equation}
\citet{bengio2021gflownet} showed that if there exists a flow function satisfying the flow-matching conditions in \cref{eq:flow-matching-condition} as well as the boundary conditions, then sampling terminating states by following $P_{F}$ in the soft MDP as described in \cref{sec:sampling-terminating-states} is equivalent to sampling from \cref{eq:gibbs-distribution}. In that case, the terminating state distribution associated with the policy $P_{F}$ satisfies $P_{F}(x) \propto \exp(-\gE(x)/\alpha)$, for any terminating state $x \in \gX$, and is not biased by the number of trajectory leading to $x$, contrary to the optimal policy in MaxEnt RL.

\begin{figure*}[t]
\centering

\begin{adjustbox}{scale=0.9}
\begin{tikzpicture}[every node/.style={inner sep=0pt}, gfn_edge/.style={ultra thick, -latex}, alg_node/.style={thick, circle, draw, minimum size=41pt, fill=white, inner sep=2pt, align=center}, intra_alg_edge/.style={thick, -latex}, inter_alg_edge/.style={latex-latex, line width=1.5pt}, y=70pt, x=75pt]

\pgfdeclarelayer{background}
\pgfsetlayers{background,main}

\definecolor{groupcolor1}{HTML}{225ea8}
\definecolor{groupcolor2}{HTML}{41b6c4}
\definecolor{groupcolor3}{HTML}{a1dab4}

\node[alg_node] (pcl) at (1, 1) {};
\node[] at (pcl.center) {\begin{tikzpicture}
    \node[scale=1] (title_pcl) {PCL};
    \node[below=2pt of title_pcl, scale=0.4] {\citep{nachum2017pcl}};
\end{tikzpicture}};

\node[alg_node] (subtb) at (1, 0) {};
\node[] at (subtb.center) {\begin{tikzpicture}
    \node[scale=1] (title_subtb) {SubTB};
    \node[below=2pt of title_subtb, scale=0.4] {\citep{madan2022subtb}};
\end{tikzpicture}};

\node[alg_node] (tb) at (0, 0) {};
\node[] at (tb.center) {\begin{tikzpicture}
    \node[scale=1] (title_tb) {TB};
    \node[below=2pt of title_tb, scale=0.4] {\citep{malkin2022trajectorybalance}};
\end{tikzpicture}};

\node[alg_node] (sql) at (2.5, 1) {};
\node[] at (sql.center) {\begin{tikzpicture}
    \node[scale=1] (title_sql) {SQL};
    \node[below=2pt of title_sql, scale=0.4] {\citep{haarnoja2017sql}};
\end{tikzpicture}};

\node[alg_node] (db) at (2, 0) {};
\node[] at (db.center) {\begin{tikzpicture}
    \node[scale=1] (title_db) {DB};
    \node[below=2pt of title_db, scale=0.4] {\citep{bengio2023gflownetfoundations}};
\end{tikzpicture}};

\node[alg_node] (fl_db) at (3, 0) {};
\node[] at (fl_db.center) {\begin{tikzpicture}
    \node[scale=1] (title_fl_db) {FL-DB};
    \node[below=2pt of title_fl_db, scale=0.4] {\citep{pan2023forwardlookinggfn}};
\end{tikzpicture}};

\node[alg_node] (policy_sql) at (4, 1) {};
\node[] at (policy_sql.center) {\begin{tikzpicture}
    \node[scale=1] (title_policy_sql) {$\pi$-SQL};
    \node[below=2pt of title_policy_sql, scale=0.4] {(\cref{sec:sql-policy-parametrization})};
\end{tikzpicture}};

\node[alg_node] (modified_db) at (4, 0) {};
\node[] at (modified_db.center) {\begin{tikzpicture}
    \node[scale=0.8, align=center] (title_modified_db) {Modified\\[-0.5ex]DB};
    \node[below=2pt of title_modified_db, scale=0.4] {\citep{deleu2022daggflownet}};
\end{tikzpicture}};

\draw[inter_alg_edge, draw=groupcolor1] (pcl) -- (subtb) node[left, midway, scale=0.5, align=center, anchor=east, inner sep=8pt] {Equivalence with\\corrected reward \cref{eq:thm-corrected-reward}\\\cref{prop:equivalence-subtb-pcl}};
\draw[inter_alg_edge, densely dashed, draw=groupcolor1] (pcl) to[bend right=30] (tb);
\draw[intra_alg_edge] (subtb) -- (tb) node[below, midway, scale=0.5, align=center, inner sep=8pt] {Complete\\trajectories\\\citep{malkin2022trajectorybalance}};
\draw[intra_alg_edge] (pcl) -- (sql) node[below, midway, scale=0.5, align=center, inner sep=8pt] {Unified PCL\\rollout $d=1$\\\citep{nachum2017pcl}};
\draw[intra_alg_edge] (subtb) -- (db) node[below, midway, scale=0.5, align=center, inner sep=8pt] {Subtrajectories\\of length 1\\\citep{malkin2022trajectorybalance}};
\draw[inter_alg_edge, densely dashed, draw=groupcolor2] (sql) -- (db);
\draw[inter_alg_edge, densely dashed, draw=groupcolor2] (sql) -- (fl_db);
\draw[intra_alg_edge] (sql) -- (policy_sql) node[below, midway, scale=0.5, align=center, inner sep=8pt] {Policy\\parametrization\\\cref{sec:sql-policy-parametrization}};
\draw[intra_alg_edge] (db) -- (fl_db) node[below, midway, scale=0.5, align=center, inner sep=8pt] {Intermediate\\rewards\\\citep{pan2023forwardlookinggfn}};
\draw[intra_alg_edge] (fl_db) -- (modified_db) node[below, midway, scale=0.5, align=center, inner sep=8pt] {All states are\\terminating\\\citep{deleu2022daggflownet}};
\draw[inter_alg_edge, densely dashed, draw=groupcolor3] (policy_sql) -- (modified_db);

\node[fit=(pcl)(subtb)(tb)] (trajectories_group) {};
\node[fit=(sql)(fl_db)(db)] (transitions_group) {};
\node[fit=(policy_sql)(modified_db)] (all_terminating_group) {};

\node[anchor=south, scale=0.8, yshift=8pt] (trajectories_group_title) at (trajectories_group.{north}) {Trajectories};
\node[anchor=south, scale=0.8, yshift=8pt] (transitions_group_title) at (transitions_group.{north}) {\vphantom{j}Transitions};
\node[anchor=south, scale=0.8, yshift=8pt] (all_terminating_group_title) at (all_terminating_group.{north}) {\vphantom{j}$\gS \equiv \gX$};

\begin{pgfonlayer}{background}
\node[fit=(trajectories_group)(trajectories_group_title), fill=groupcolor1!20, inner sep=8pt, rounded corners=3pt] {};
\node[fit=(transitions_group)(transitions_group_title), fill=groupcolor2!20, inner sep=8pt, rounded corners=3pt] {};
\node[fit=(all_terminating_group)(all_terminating_group_title), fill=groupcolor3!20, inner sep=8pt, rounded corners=3pt] (all_terminating_group_fill) {};
\end{pgfonlayer}

\coordinate (mid) at ($(pcl.south)!0.5!(subtb.north)$);
\coordinate[xshift=10pt] (mid_right) at (mid -| all_terminating_group_fill.east);
\draw[|-|, thick, shorten <=-5pt, shorten >=-5pt] (policy_sql.north -| mid_right) -- (policy_sql.south -| mid_right) node[midway, above, sloped, scale=0.8, font=\bfseries, inner sep=8pt] {MaxEnt RL};
\draw[|-|, thick, shorten <=-5pt, shorten >=-5pt] (modified_db.north -| mid_right) -- (modified_db.south -| mid_right) node[midway, above, sloped, scale=0.8, font=\bfseries, inner sep=8pt] {GFlowNet};

\end{tikzpicture}
\end{adjustbox}
    \caption{Equivalence between objectives in MaxEnt RL, with corrected rewards, and the objectives in GFlowNets. The objectives are classified based on whether they operate at the level of (complete) trajectories (left), transitions (middle), or if all the states are terminating (right). Further details about the form of the different residuals and the correspondences to transfer from one objective to another are available in \cref{fig:residual-equivalence}.}
    \label{fig:algrithms-equivalence}
\end{figure*}
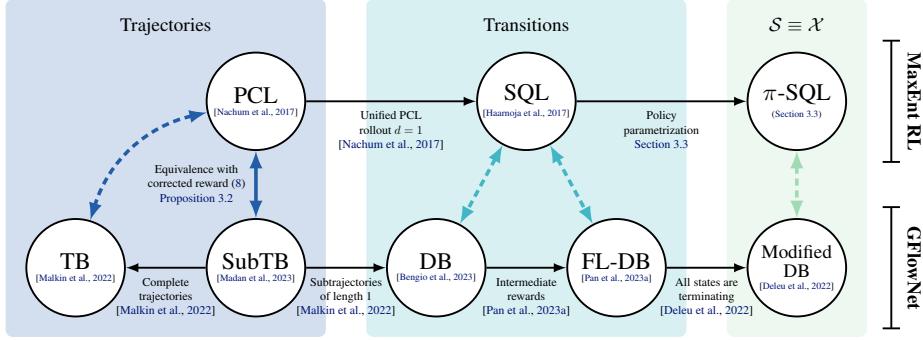

\section{Bridging the gap between MaxEnt RL \& GFlowNets}
\label{sec:bridging-gap-maxentrl-gfn}
There exists a fundamental difference between MaxEnt RL and GFlowNets in the way the distributions induced by their (optimal) policies relate to the energy: in MaxEnt RL, this distribution is over \emph{how} objects are being created (trajectories) as mentioned in \cref{sec:sampling-terminating-states}, whereas a GFlowNet induces a distribution over the \emph{outcomes} only (terminating states), the latter matching the requirements of the Gibbs distribution. In this section, we recall some existing connections between GFlowNets and MaxEnt RL \citep{tiapkin2023gfnmaxentrl}, and we establish new equivalences between existing MaxEnt RL algorithms and GFlowNet objectives.

\subsection{Reward correction}
\label{sec:reward-correction}
To correct the bias illustrated in \cref{fig:toy-gflownet} caused by multiple trajectories leading to the same terminating state, we can treat \cref{eq:terminating-state-probability} not as a sum but as an \emph{expectation} over trajectories, by reweigthing $\pi^{*}(\tau)$ (which is constant for a fixed terminating state, by \cref{eq:distribution-trajectories} \& \cref{eq:reward-function-soft-mdp}) with a probability distribution over these trajectories. \citet{bengio2023gflownetfoundations} showed that such a distribution over complete trajectories can be defined by introducing a \emph{backward transition probability} $P_{B}(s\mid s')$, which is a distribution over the parents $s \in \mathrm{Pa}(s')$ of any state $s' \neq s_{0}$. \citet{tiapkin2023gfnmaxentrl} showed that the reward of the soft MDP can be modified based on $P_{B}$ in such a way that the corresponding optimal policy $\pi^{*}_{\mathrm{MaxEnt}}$ is equal to the GFlowNet policy $P_{F}$ in \cref{eq:forward-transition-probability}. We restate and generalize this result in \cref{thm:maxent-rl-unbiased}, where we show how this correction counteracts the effect of the marginalization in \cref{eq:terminating-state-probability}, resulting in a terminating state distribution that matches the Gibbs distribution.

\begin{restatable}[Gen. of \citealp{tiapkin2023gfnmaxentrl}; Theorem 1]{theorem}{maxentrlunbiased}
    \label{thm:maxent-rl-unbiased}
    Let $P_{B}(\cdot\mid s')$ be an arbitrary backward transition probability (\emph{i.e.}, a distribution over the parents of $s'\neq s_{0}$ in $\gG$). Let $r(s, s')$ be the reward function of the MDP corrected with $P_{B}$, satisfying for any trajectory $\tau = (s_{0}, s_{1}, \ldots, s_{T}, s_{f})$:
    \begin{equation}
        \sum_{t=0}^{T}r(s_{t}, s_{t+1}) = -\gE(s_{T}) + \alpha \sum_{t=0}^{T-1}\log P_{B}(s_{t}\mid s_{t+1}),
        \label{eq:thm-corrected-reward}
    \end{equation}
    where we used the convention $s_{T+1} = s_{f}$. Then the terminating state distribution associated with the optimal policy $\pi_{\mathrm{MaxEnt}}^{*}$ solution of \cref{eq:maxent-rl-problem} satisfies $\pi^{*}(x) \propto \exp(-\gE(x) / \alpha)$.
\end{restatable}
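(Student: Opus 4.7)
The plan is to start from the standard result in the control-as-inference literature, which is restated in \cref{eq:distribution-trajectories}: the optimal MaxEnt RL policy induces a trajectory distribution $\pi^{*}(\tau) \propto \exp\bigl(\tfrac{1}{\alpha}\sum_{t=0}^{T} r(s_{t}, s_{t+1})\bigr)$. Substituting the corrected reward from \cref{eq:thm-corrected-reward} into the exponent, the $\alpha$ in front of the log term cancels $1/\alpha$, and the sum of rewards splits cleanly into an energy factor and a product of backward transition probabilities:
\[
\pi^{*}(\tau) \propto \exp\bigl(-\gE(s_{T})/\alpha\bigr) \prod_{t=0}^{T-1} P_{B}(s_{t}\mid s_{t+1}).
\]
Next I would marginalize over trajectories ending at a fixed terminating state $x \in \gX$ (so $s_{T}=x$), pulling the energy factor---which is constant along such trajectories---outside the sum:
\[
\pi^{*}(x) = \sum_{\tau: s_{0} \rightsquigarrow x} \pi^{*}(\tau) \propto \exp\bigl(-\gE(x)/\alpha\bigr) \sum_{\tau: s_{0} \rightsquigarrow x} \prod_{t=0}^{T-1} P_{B}(s_{t}\mid s_{t+1}).
\]

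The remaining task, and the main technical step, is to show that this inner sum equals $1$ for every terminating state $x$, so the proportionality constant is independent of $x$. I would establish the stronger claim that for every state $s' \in \gS$ reachable from $s_{0}$, one has $\sum_{\tau: s_{0} \rightsquigarrow s'} \prod_{t} P_{B}(s_{t}\mid s_{t+1}) = 1$, by induction on the topological order of $\gG$. The base case $s' = s_{0}$ holds by convention (the only trajectory is trivial, and the empty product is $1$). For the inductive step, I would decompose each trajectory ending at $s'$ according to its penultimate state $s \in \mathrm{Pa}(s')$, obtaining
\[
\sum_{\tau: s_{0} \rightsquigarrow s'} \prod_{t=0}^{T-1} P_{B}(s_{t}\mid s_{t+1}) = \sum_{s \in \mathrm{Pa}(s')} P_{B}(s \mid s') \sum_{\tau': s_{0} \rightsquigarrow s} \prod_{t} P_{B}(s_{t} \mid s_{t+1}),
\]
apply the induction hypothesis to each inner sum to get $1$, and conclude using that $P_{B}(\cdot\mid s')$ is a probability distribution on $\mathrm{Pa}(s')$. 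Finiteness of $\gG$ and the fact that $s_{0}$ is the unique source ensure the induction is well-founded.

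The only real obstacle is this induction: everything else is substitution and rearrangement. Compared to the statement of \citet{tiapkin2023gfnmaxentrl}, which fixed a particular $P_{B}$, the argument only uses that $P_{B}(\cdot\mid s')$ is a valid probability distribution over parents, and never its specific functional form; this is what yields the claimed generalization to arbitrary $P_{B}$.
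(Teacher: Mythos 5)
Your proposal is correct and follows essentially the same route as the paper's proof: both substitute the corrected return into the trajectory distribution induced by the optimal MaxEnt policy, pull out the factor $\exp(-\gE(x)/\alpha)$, and reduce the claim to the identity $\sum_{\tau: s_{0}\rightsquigarrow x}\prod_{t}P_{B}(s_{t}\mid s_{t+1}) = 1$. The only differences are minor: the paper re-derives the trajectory distribution from the soft Bellman optimality equations (which makes the normalization constant $\exp(V^{*}_{\soft}(s_{0})/\alpha)$ explicit) rather than invoking \cref{eq:distribution-trajectories} directly, and it cites Lemma 5 of \citet{bengio2023gflownetfoundations} for the unit-mass property of $P_{B}$, which you instead establish by a (valid) induction along the topological order of $\gG$.
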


The proof of the theorem is available in \cref{app:reward-correction}. Unlike \cref{eq:reward-function-soft-mdp}, the return now depends on the trajectory leading to $s_{T}$ via the second term in \cref{eq:thm-corrected-reward}. Interestingly, the temperature parameter $\alpha$ introduced in the MaxEnt RL literature \citep{haarnoja2017sql} finds a natural interpretation as the temperature of the Gibbs distribution. Note that the correction in \cref{eq:thm-corrected-reward} only involves the backward probability of the whole trajectory $\tau$, making it also compatible even with non-Markovian $P_{B}$ \citep{shen2023understandingtraininggfn,bengio2023gflownetfoundations}.

\citet{tiapkin2023gfnmaxentrl} only considered the case where the reward function of the soft MDP is sparse, and the correction with $P_{B}(s_{t}\mid s_{t+1})$ is added at each intermediate transition $s_{t} \rightarrow s_{t+1}$; we will go back to this setting in the following section. This correction of the reward is fully compatible with our observation in \cref{sec:sampling-terminating-states} that sampling terminating states with $\pi^{*}_{\mathrm{MaxEnt}}$ yields samples of \cref{eq:gibbs-distribution} when the soft MDP is a tree with the (uncorrected) reward in \cref{eq:reward-function-soft-mdp}, since in that case any state $s' \neq s_{0}$ has a unique parent $s$, and thus $P_{B}(s\mid s') = 1$ as also observed by \citet{tiapkin2023gfnmaxentrl}.

\subsection{Equivalence between PCL \& (Sub)TB}
\label{sec:equivalence-pcl-subtb}
Similar to how \cref{eq:reward-function-soft-mdp} covered the particular case of a sparse reward, in this section we will consider a reward function satisfying \cref{eq:thm-corrected-reward} where the energy function only appears at the end of the trajectory
\begin{equation}\begin{aligned}
    r(s_{t}, s_{t+1}) &= \alpha \log P_{B}(s_{t}\mid s_{t+1})\\ r(s_{T}, s_{f}) &= -\gE(s_{T}),
\end{aligned}\label{eq:distribution-corrected-reward-sparse}\end{equation}
as introduced in \citep{tiapkin2023gfnmaxentrl}. \cref{thm:maxent-rl-unbiased} suggests that solving the MaxEnt RL problem in \cref{eq:maxent-rl-problem} with the corrected reward is comparable to finding a solution of a GFlowNet, as they both lead to a policy whose terminating state distribution is the Gibbs distribution. It turns out that there exists an equivalence between specific algorithms solving these two problems with our choice of reward function above: Path Consistency Learning (PCL; \citealp{nachum2017pcl}) for MaxEnt RL, and the Subtrajectory Balance objective (SubTB; \citealp{madan2022subtb, malkin2022trajectorybalance}) for GFlowNets. Both of these objectives operate at the level of partial trajectories of the form $\tau = (s_{m}, s_{m+1}, \ldots, s_{n})$, where $s_{m}$ and $s_{n}$ are not necessarily the initial and terminal states anymore.

On the one hand, the PCL objective $\gL_{\mathrm{PCL}}(\theta, \phi) = \frac{1}{2}\E_{\pi_{b}}[\Delta^{2}_{\mathrm{PCL}}(\tau; \theta, \phi)]$ encourages the consistency between a policy $\pi_{\theta}$ parametrized by $\theta$ and a soft value function $V_{\soft}^{\phi}$ parametrized by $\phi$, where $\pi_{b}$ is an arbitrary distribution over (partial) trajectories $\tau$, and the residual is defined as
\begin{align}
    \Delta_{\mathrm{PCL}}(\tau; &\theta, \phi) = -V_{\soft}^{\phi}(s_{m}) + V_{\soft}^{\phi}(s_{n}) \label{eq:residual-pcl}\\
    &+ \sum_{t=m}^{n-1}\big(r(s_{t}, s_{t+1}) - \alpha \log \pi_{\theta}(s_{t+1}\mid s_{t})\big).\nonumber
\end{align}

On the other hand, the SubTB objective $\gL_{\mathrm{SubTB}}(\theta, \phi) = \frac{1}{2}\E_{\pi_{b}}[\Delta^{2}_{\mathrm{SubTB}}(\tau; \theta, \phi)]$ also enforces some form of consistency, but this time between a policy (forward transition probability) $P_{F}^{\theta}$ parametrized by $\theta$, and a state flow function $F_{\phi}$ parametrized by $\phi$, and the residual is defined as
\begin{equation*}
    \Delta_{\mathrm{SubTB}}(\tau; \theta, \phi) = \log \frac{F_{\phi}(s_{n})\prod_{t=m}^{n-1}P_{B}(s_{t}\mid s_{t+1})}{F_{\phi}(s_{m})\prod_{t=m}^{n-1}P_{F}^{\theta}(s_{t+1}\mid s_{t})},
    \label{eq:residual-subtb}
\end{equation*}
where $P_{B}$ is a backward transition probability, which we assume to be fixed here---although in general, $P_{B}$ may also be learned \citep{malkin2022trajectorybalance}. In addition to this objective, some boundary conditions on $F_{\phi}$ must also be enforced (depending on $\gE$), similar to the ones introduced in \cref{sec:gflownets}; see \cref{app:equivalence-pcl-subtb} for details. The following proposition establishes the equivalence between these two objectives, up to a normalization constant that only depends on the temperature $\alpha$, and provides a way to move from the policy/value function parametrization in MaxEnt RL to the policy/flow function parametrization in GFlowNets. Although similarities between these two methods have been mentioned in prior work \citep{malkin2022trajectorybalance,jiralerspong2023eflownet, hu2023gfnllm,mohammadpour2023maxentgfn}, we show here an exact equivalence between both objectives.
\begin{restatable}{proposition}{equivsubtbpcl}
    \label{prop:equivalence-subtb-pcl}
    The Subtrajectory Balance objective (GFlowNet; \citealp{madan2022subtb}) is proportional to the Path Consistency Learning objective (MaxEnt RL; \citealp{nachum2017pcl}) on the soft MDP with the reward function defined in \cref{eq:distribution-corrected-reward-sparse}, in the sense that $\gL_{\mathrm{PCL}}(\theta, \phi) = \alpha^{2}\gL_{\mathrm{SubTB}}(\theta, \phi)$, with the following correspondence
    \begin{align}
        \pi_{\theta}(s'\mid s) &= P_{F}^{\theta}(s'\mid s) &&& V^{\phi}_{\soft}(s) &= \alpha \log F_{\phi}(s). \label{eq:equivalence-pcl-subtb-policy-value}
    \end{align}
\end{restatable}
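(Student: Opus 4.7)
The plan is to establish the identity $\Delta_{\mathrm{PCL}}(\tau;\theta,\phi) = \alpha\,\Delta_{\mathrm{SubTB}}(\tau;\theta,\phi)$ pointwise on every partial trajectory $\tau$ sampled from $\pi_b$. Squaring both sides and taking expectations then yields $\gL_{\mathrm{PCL}}(\theta,\phi) = \alpha^{2}\gL_{\mathrm{SubTB}}(\theta,\phi)$, so no distributional argument is needed beyond a deterministic algebraic identity.

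First I would substitute the correspondence $V_{\soft}^{\phi}(s) = \alpha\log F_{\phi}(s)$ and $\pi_{\theta}(s'\mid s) = P_{F}^{\theta}(s'\mid s)$ directly into \cref{eq:residual-pcl}. The boundary terms become $-\alpha\log F_{\phi}(s_{m}) + \alpha\log F_{\phi}(s_{n})$, matching (after a factor of $\alpha$) the $\log F_{\phi}(s_{n})/F_{\phi}(s_{m})$ portion of $\Delta_{\mathrm{SubTB}}$. The $-\alpha\log \pi_{\theta}(s_{t+1}\mid s_{t})$ terms become $-\alpha\log P_{F}^{\theta}(s_{t+1}\mid s_{t})$, matching the denominator product in $\Delta_{\mathrm{SubTB}}$ after taking the logarithm.

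Second, I would plug in the reward of \cref{eq:distribution-corrected-reward-sparse}. For any interior transition (i.e., $s_{t+1}\neq s_{f}$) we have $r(s_{t},s_{t+1}) = \alpha\log P_{B}(s_{t}\mid s_{t+1})$, so the reward sum in $\Delta_{\mathrm{PCL}}$ contributes exactly the numerator product $\alpha\sum_{t=m}^{n-1}\log P_{B}(s_{t}\mid s_{t+1})$ appearing in $\alpha\Delta_{\mathrm{SubTB}}$. At this point the algebraic identity $\Delta_{\mathrm{PCL}} = \alpha\Delta_{\mathrm{SubTB}}$ holds on every partial trajectory whose final transition is not into $s_{f}$, and the claimed loss-level equality follows by squaring and taking the expectation.

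The main obstacle I anticipate is the boundary case in which the partial trajectory terminates, i.e., $s_{n} = s_{f}$. There the last term of the PCL reward sum is $r(s_{T},s_{f}) = -\gE(s_{T})$ rather than $\alpha\log P_{B}$, and SubTB correspondingly replaces $F_{\phi}(s_{n})$ by the reward boundary $F(x\to s_{f}) = \exp(-\gE(x)/\alpha)$ (equivalently, $\alpha\log F_{\phi}(s_{f}) = -\gE(s_{T}) + \alpha\log F_{\phi}(s_{T})$ once one also uses $F_{\phi}(s_{f}) = 1$ by convention). I would make this precise by appealing to the boundary conditions recalled in \cref{app:equivalence-pcl-subtb}, showing that under the correspondence \cref{eq:equivalence-pcl-subtb-policy-value} the SubTB boundary condition on $F_{\phi}$ translates exactly into the PCL boundary condition $V_{\soft}^{\phi}(s_{f}) = 0$ together with the energy term entering through the last reward. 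Once this bookkeeping is done consistently, the pointwise identity extends to trajectories that reach $s_{f}$ and the proposition follows.
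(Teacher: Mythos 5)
Your proposal is correct and follows essentially the same route as the paper's proof: substitute the correspondence \cref{eq:equivalence-pcl-subtb-policy-value} and the reward \cref{eq:distribution-corrected-reward-sparse} into the PCL residual, obtain the pointwise identity $\Delta_{\mathrm{PCL}} = \alpha\,\Delta_{\mathrm{SubTB}}$ with a case split on whether $s_{n} = s_{f}$, handling the terminal case via the SubTB boundary condition $F(x \to s_{f}) = \exp(-\gE(x)/\alpha)$ together with $V^{\phi}_{\soft}(s_{f}) = 0$ and the terminal reward $-\gE(s_{T})$, then square and take expectations. Only your parenthetical rewriting ``$\alpha\log F_{\phi}(s_{f}) = -\gE(s_{T}) + \alpha\log F_{\phi}(s_{T})$ with $F_{\phi}(s_{f})=1$'' is not the right form of the boundary condition (it constrains the \emph{edge} flow into $s_{f}$, not the state flow at $s_{T}$), but this is immaterial since your main argument goes through the correct boundary condition exactly as in the paper.
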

The proof of this proposition is available in \cref{app:equivalence-pcl-subtb}. The equivalence between the value function in MaxEnt RL and the state flow function in GFlowNets was also found in \citep{tiapkin2023gfnmaxentrl}. When applied to complete trajectories, this also shows the connection between the Trajectory Balance objective (TB; \citealp{malkin2022trajectorybalance}), widely used in the GFlowNet literature, and PCL under our choice of corrected reward in \cref{eq:distribution-corrected-reward-sparse}.

On the other end of the spectrum, if we apply this proposition to transitions in the soft MDP (\emph{i.e.}, subtrajectories of length 1), then we can obtain a similar equivalence between the Detailed Balance objective in GFlowNets (DB; \citealp{bengio2023gflownetfoundations}), and the Soft Q-Learning algorithm (SQL; \citealp{haarnoja2017sql}), via the \emph{Unified PCL} perspective of \citet{nachum2017pcl} that uses a soft Q-function in order to simultaneously parametrize both the policy and the value function in \cref{eq:residual-pcl}; see \cref{cor:equivalence-db-sql} for a detailed statement. We note that this connection between DB \& SQL was also mentioned in prior work \citep{tiapkin2023gfnmaxentrl,mohammadpour2023maxentgfn}. A full summary of the connections resulting from \cref{prop:equivalence-subtb-pcl} between different MaxEnt RL and GFlowNet objectives is available in \cref{fig:algrithms-equivalence}, with further details in \cref{fig:residual-equivalence}.

\subsection{Soft Q-Learning with policy parametrization}
\label{sec:sql-policy-parametrization}
In this section, we consider the case where all the states of the soft MDP are valid elements of the sample space $\gX$ (in other words, all the states are terminating). Since every state is now associated with some energy, we can reshape the rewards \citep{ng1999rewardshaping} while still satisfying \cref{eq:thm-corrected-reward} as
\begin{align}
    r(s_{t}, s_{t+1}) &= \gE(s_{t}) - \gE(s_{t+1}) + \alpha \log P_{B}(s_{t}\mid s_{t+1})\nonumber\\ r(s_{T}, s_{f}) &= 0,
    \label{eq:distribution-corrected-reward-dense}
\end{align}
if we assume, without loss of generality, that $\gE(s_{0}) = 0$ (any offset added to the energy function leaves \cref{eq:gibbs-distribution} unchanged). This is a novel setting that differs from \citet{tiapkin2023gfnmaxentrl}, and was made possible thanks to our general statement in \cref{thm:maxent-rl-unbiased}. We show in \cref{prop:policy-parametrization-sql} that with our choice of rewards above, in particular the fact that no reward is received upon termination, we can express the objective of Soft Q-Learning as a function of a policy $\pi_{\theta}$ parametrized by $\theta$, instead of a Q-function; we call this \emph{$\pi$-SQL}. The objective can be written as $\gL_{\pi\textrm{-}\mathrm{SQL}}(\theta) = \frac{1}{2}\E_{\pi_{b}}[\Delta_{\pi\textrm{-}\mathrm{SQL}}^{2}(s, s'; \theta)]$, where $\pi_{b}$ is an arbitrary distribution over transitions $s \rightarrow s'$ such that $s'\neq s_{f}$, and
\begin{align}
    \Delta_{\pi\textrm{-}\mathrm{SQL}}(s, s'; \theta) = \alpha\big[&\log \pi_{\theta}(s'\mid s) - \log \pi_{\theta}(s_{f}\mid s) \nonumber\\&+ \log \pi_{\theta}(s_{f}\mid s')\big] - r(s, s').
    \label{eq:residual-pisql}
\end{align}
With the reward function in \cref{eq:distribution-corrected-reward-dense}, this alternative perspective on SQL is remarkable in that it is equivalent to the Modified Detailed Balance objective (Modified DB; \citealp{deleu2022daggflownet}), specifically derived in the special case of GFlowNets whose states are all terminating. This objective can be written as $\gL_{\mathrm{M}\textrm{-}\mathrm{DB}}(\theta) = \frac{1}{2}\E_{\pi_{b}}[\Delta_{\mathrm{M}\textrm{-}\mathrm{DB}}^{2}(s, s'; \theta)]$ that depends on a policy (forward transition probability) $P_{F}^{\theta}$ parametrized by $\theta$, where
\begin{align}
    &\Delta_{\mathrm{M}\textrm{-}\mathrm{DB}}(s, s'; \theta) \label{eq:residual-modified-db}\\
    &\qquad = \log \frac{\exp(-\gE(s')/\alpha)P_{B}(s\mid s')P_{F}^{\theta}(s_{f}\mid s)}{\exp(-\gE(s)/\alpha)P_{F}^{\theta}(s'\mid s)P_{F}^{\theta}(s_{f}\mid s')}.\nonumber
\end{align}
\begin{restatable}{proposition}{equivpisqlmodifieddb}
    \label{prop:equivalence-pisql-modified-db}
    Suppose that all the states of the soft MDP are terminating $\gS \equiv \gX$. The Modified Detailed Balance objective (GFlowNet; \citealp{deleu2022daggflownet}) is proportional to the Soft Q-Learning objective with a policy parametrization (MaxEnt RL; $\pi$-SQL) on the soft MDP with the reward function defined in \cref{eq:distribution-corrected-reward-dense}, in the sense that $\gL_{\pi\textrm{-}\mathrm{SQL}}(\theta) = \alpha^{2}\gL_{\mathrm{M}\textrm{-}\mathrm{DB}}(\theta)$, with $\pi_{\theta}(s'\mid s) = P_{F}^{\theta}(s'\mid s)$.
\end{restatable}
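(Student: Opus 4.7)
The plan is to show the two residuals agree up to a sign and a factor of $\alpha$, and then observe that squaring and taking expectations (over the same distribution $\pi_b$ of transitions $s\to s'$ with $s'\neq s_f$) yields the claimed identity.

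First, I would substitute the correspondence $\pi_\theta(s'\mid s) = P_F^\theta(s'\mid s)$ into the $\pi$-SQL residual in \cref{eq:residual-pisql} and expand $\alpha \cdot \Delta_{\mathrm{M}\textrm{-}\mathrm{DB}}(s,s';\theta)$ from \cref{eq:residual-modified-db} using $\log(ab/cd) = \log a + \log b - \log c - \log d$. The latter splits into an ``energy part'' $-\gE(s') + \gE(s) + \alpha\log P_B(s\mid s')$, which is exactly $r(s,s')$ by the reward definition in \cref{eq:distribution-corrected-reward-dense}, and a ``policy part'' $\alpha \log P_F^\theta(s_f\mid s) - \alpha \log P_F^\theta(s'\mid s) - \alpha \log P_F^\theta(s_f\mid s')$. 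Thus
\begin{equation*}
\alpha\cdot \Delta_{\mathrm{M}\textrm{-}\mathrm{DB}}(s,s';\theta) = r(s,s') - \alpha\big[\log P_F^\theta(s'\mid s) - \log P_F^\theta(s_f\mid s) + \log P_F^\theta(s_f\mid s')\big],
\end{equation*}
which under the correspondence is precisely $-\Delta_{\pi\textrm{-}\mathrm{SQL}}(s,s';\theta)$.

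Squaring removes the sign, giving $\Delta_{\pi\textrm{-}\mathrm{SQL}}^{2}(s,s';\theta) = \alpha^{2}\Delta_{\mathrm{M}\textrm{-}\mathrm{DB}}^{2}(s,s';\theta)$ pointwise, and taking the expectation under the common sampling distribution $\pi_b$ yields $\gL_{\pi\textrm{-}\mathrm{SQL}}(\theta) = \alpha^{2}\gL_{\mathrm{M}\textrm{-}\mathrm{DB}}(\theta)$. The only mildly nontrivial point is that the manipulation uses that termination is modelled as a distinguished action with probability $P_F^\theta(s_f\mid s) = \pi_\theta(s_f\mid s)$ available at every state (valid here because $\gS\equiv\gX$ so every state is terminating), and that the reward at termination is zero in \cref{eq:distribution-corrected-reward-dense}, so no energy-at-termination term needs to be reconciled across the two residuals.

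I expect no genuine obstacle beyond this bookkeeping: the proposition is essentially a statement that the reward shaping in \cref{eq:distribution-corrected-reward-dense} is precisely tuned so that the Bellman-type consistency used by $\pi$-SQL and the detailed-balance-type consistency used by Modified DB coincide. The only step requiring care is bookkeeping the signs and the placement of $\alpha$'s when moving between the $\log F$ and $V_\soft$ world (avoided here because $\pi$-SQL is already parametrized by a policy) and the $\log P_F$ world; everything else is a direct substitution of \cref{eq:distribution-corrected-reward-dense}.
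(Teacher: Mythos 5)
Your proposal is correct and follows essentially the same argument as the paper's proof: substitute the reward from \cref{eq:distribution-corrected-reward-dense} and the correspondence $\pi_{\theta} = P_{F}^{\theta}$, observe that the two residuals satisfy $\Delta_{\pi\textrm{-}\mathrm{SQL}}(s,s';\theta) = -\alpha\,\Delta_{\mathrm{M}\textrm{-}\mathrm{DB}}(s,s';\theta)$ pointwise, and conclude by squaring and taking the expectation under the common $\pi_{b}$. The only cosmetic difference is that you expand the Modified DB residual and recognize $r(s,s')$ inside it, whereas the paper expands the $\pi$-SQL residual and regroups it into the Modified DB form---the same algebra read in the opposite direction.
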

The proof is available in \cref{app:equivalence-pisql-mdb}. This result can be further generalized to cases where the states are not necessarily all terminating, but where some partial reward can be received along the trajectory, with an equivalence between SQL and the Forward-Looking Detailed Balance objective (FL-DB; \citealp{pan2023forwardlookinggfn}) in GFlowNets; see \cref{app:equivalence-sql-fl-db} for details.

\begin{figure*}[t]
    \centering
    \begin{adjustbox}{center}
    \includegraphics[width=1.1\textwidth]{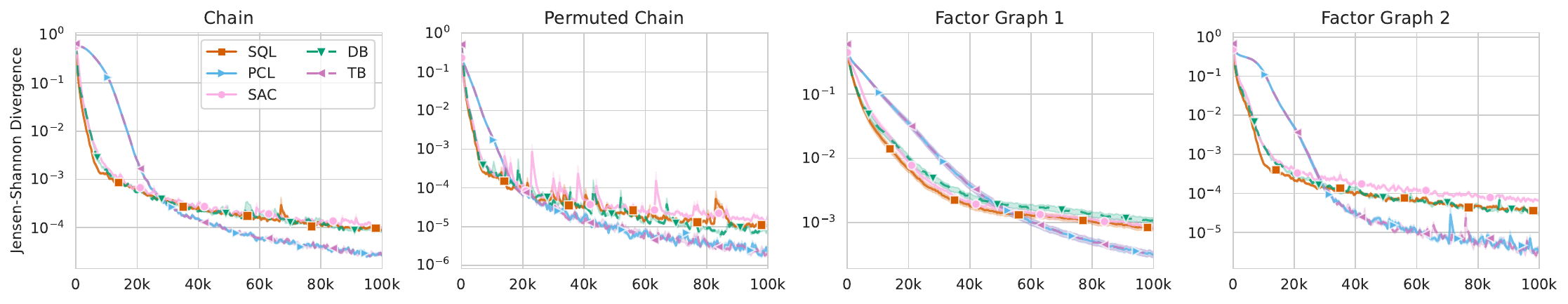}
    \end{adjustbox}
    \caption{Comparison of MaxEnt RL and GFlowNet algorithms on the factor graph inference task, in terms of the Jensen-Shannon divergence between the terminating state distribution and the target distribution during training. Each curve represents the average JSD with 95\% confidence interval over 20 random seeds.}
    \label{fig:treesample}
\end{figure*}

\section{Related Work}
\label{sec:related-work}
\paragraph{Maximum Entropy Reinforcement Learning.} Unlike standard reinforcement learning where an optimal policy may be completely deterministic (at least in the fully observable case; \citealp{sutton2018introrl}), MaxEnt RL seeks a \emph{stochastic} policy that balances between reward maximization and maximal entropy of future actions \citep{ziebart2010maxent,fox2016tamingnoiserl}. This type of entropy regularization falls into the broader domain of regularized MDPs \citep{geist2019regularizedmdps}. This can be particularly beneficial for improving exploration \citep{haarnoja2017sql} and for robust control under model misspecification \citep{eysenbach2022maxentrlrobust}. Popular MaxEnt RL methods include Soft Q-Learning, Path Consistency Learning \citep{nachum2017pcl}, and Soft Actor-Critic \citep{haarnoja2018sac} studied in this paper.

\paragraph{Generative Flow Networks.} \citet{bengio2021gflownet} took inspiration from reinforcement learning and introduced GFlowNets as a solution for finding diverse molecules binding to a target protein. Since then, they have found applications in a number of domains in scientific discovery \citep{jain2022gfnbiological,jain2023gfnscientific,milaai4science2023crystalgfn}, leveraging diversity in conjunction with active learning \citep{jain2023mogfn,hernandez2023multifidelitygfn}, but also in combinatorial optimization \citep{zhang2023gfnrobustscheduling,zhang2023graphcogfn}, causal discovery \citep{deleu2022daggflownet,deleu2023jspgfn,atanackovic2023dyngfn}, and probabilistic inference in general \citep{zhang2022ebgfn,hu2023gflownetem,hu2023gfnllm,falet2024deltaai}. Although they were framed differently, GFlowNets are also deeply connected to the literature on variational inference \citep{malkin2022gfnhvi,zimmermann2022vigfn}.

A number of works have recently established connections between GFlowNets and (maximum entropy) RL. Closest to our work, \citet{tiapkin2023gfnmaxentrl} showed how the reward in MaxEnt RL can be corrected based on some backward transition probability to be equivalent to GFlowNets. Although their analysis is limited to the case where the reward function in the original soft MDP is sparse (\emph{i.e.}, the reward is only obtained at the end of the trajectory), they were the first to propose a correction applied at each intermediate transition as in \cref{sec:equivalence-pcl-subtb}. We generalized this in \cref{thm:maxent-rl-unbiased} with a correction at the level of the \emph{trajectories}, which offers more flexibility in how the correction is distributed along the trajectory and allows intermediate rewards. \citet{tiapkin2023gfnmaxentrl} also showed similarities between GFlowNet objectives and MaxEnt RL algorithms, namely between Detailed Balance \& Dueling Soft Q-Learning \citep{wang2016dueling}, and between Trajectory Balance \& Policy Gradient \citep{schulman2017equivalencesqlpg}. The correspondence between TB and Policy Gradient was further expanded in \citep{anonymous2023gfnpolicygradient}, as a direct consequence of the connections between GFlowNets, variational inference \citep{malkin2022gfnhvi}, and reinforcement learning \citep{weber2015virl}. Finally, \citet{mohammadpour2023maxentgfn} also introduced a correction that depends on $n(s)$ the number of (partial) trajectories to a certain state $s$, which can be learned by solving a second MaxEnt RL problem \cref{eq:maxent-rl-problem} on an ``inverse'' MDP. This correction corresponds to a particular choice of backward transition probability $P_{B}(s_{t}\mid s_{t+1}) = n(s_{t}) / n(s_{t+1})$ in \cref{eq:thm-corrected-reward}, which has the remarkable property to maximize the flow entropy.

\section{Experimental results}
\label{sec:experimental-results}
We verify empirically the equivalences established in \cref{sec:bridging-gap-maxentrl-gfn} on three domains: the inference over discrete factor graphs \citep{buesing2020approximate}, Bayesian structure learning of Bayesian networks \citep{deleu2022daggflownet}, and the generation of parsimonious phylogenetic trees \citep{zhou2024phylogfn}. In addition to Detailed Balance (and possibly its modified version) and Trajectory Balance on the one hand (GFlowNets), and Soft Q-Learning (possibly parametrized by a policy; see \cref{sec:sql-policy-parametrization}) and Path Consistency Learning on the other hand (MaxEnt RL), we also consider a discrete version of Soft Actor-Critic \citep{christodoulou2019discretesac}, which has no natural conterpart in the GFlowNet literature. For all MaxEnt RL methods, we adjust the MDP to include the correction of the reward. Note that in all the domains considered here, non-trivial intermediate rewards are available in the original MDP, meaning in particular that all instances of DB actually use the Forward-Looking formulation \citep{pan2023forwardlookinggfn}; the form of these intermediate rewards along with additional experimental details are available in \cref{app:experimental-details}.

\subsection{Probabilistic inference over discrete factor graphs}
\label{sec:experiments-treesample}
The probabilistic inference task in \citet{buesing2020approximate} consists in sequentially sampling the values of $d$ discrete random variables in a factor graph one at a time, with a fixed order. This makes the underlying MDP having a tree structure, eliminating the need for reward correction, as described in \cref{sec:sampling-terminating-states}. We adapted this environment to have multiple trajectories leading to each terminating state by allowing sampling the random variables in any order. Details about the energy function are available in \cref{app:details-treesample}.

In \cref{fig:treesample}, we show the performance of the different MaxEnt RL and GFlowNet algorithms on 4 different factor graph structures, as proposed by \citet{buesing2020approximate}, with $d = 6$ variables and where each variable can take one of 5 possible values. We observe that TB \& PCL perform similarly, validating \cref{prop:equivalence-subtb-pcl}, and overall outperform all other methods. Similarly, we can see that DB \& SQL also perform similarly as expected by \cref{cor:equivalence-db-sql}. Finally, although SAC is generally viewed as a strong algorithm for continuous control \citep{Haarnoja2018}, we did not observe any significant improvement over DB/SQL.

\begin{figure*}[t]
    \centering
    \includegraphics[width=0.8\textwidth]{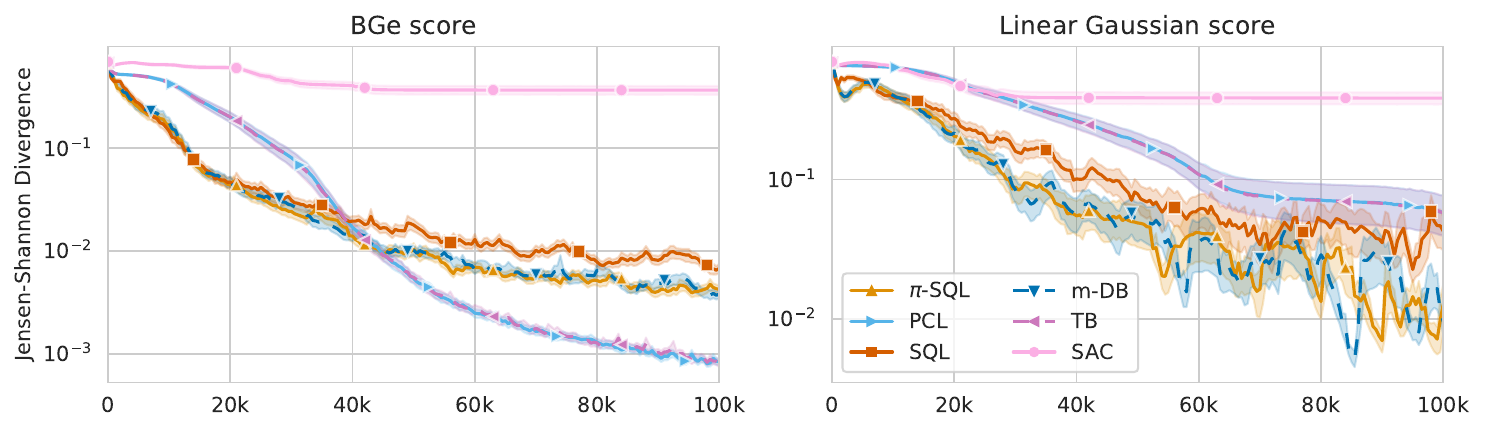}
    \caption{Comparison of MaxEnt RL and GFlowNet algorithms on the Bayesian structure learning task, in terms of the Jensen-Shannon divergence between the terminating state distribution and the target posterior during training. Both experiments differ in the way the marginal likelihood $P(\gD\mid G)$ is computed, (left) using the BGe score \citep{geiger1994bge}, (right) is the linear Gaussian score \citep{nishikawa2022vbg}. Each curve represents the average JSD with 95\% confidence interval over 20 random seeds.}
    \label{fig:dag_gfn}
\end{figure*}

\subsection{Structure learning of Bayesian Networks}
\label{sec:experiments-dag_gfn}
We also evaluated all algorithms on the task of learning the structure of Bayesian networks, using a Bayesian perspective \citep{deleu2022daggflownet}. Given a dataset of observations $\gD$ from a joint distribution over $d$ random variables $X_{1}, \ldots, X_{d}$, our objective is to approximate the posterior distribution $P(G\mid \gD) \propto P(\gD\mid G)P(G)$ over the DAG structures $G$ encoding the conditional independencies of $P(X_{1}, \ldots, X_{d})$. The soft MDP is constructed as in \citep{deleu2022daggflownet}, where a DAG $G$ is constructed by adding one edge at a time, starting from a completely empty graph, while enforcing the acyclicity of the graph at each step of generation (\emph{i.e.}, an edge cannot be added if it would introduce a cycle).

Following \citet{malkin2022gfnhvi}, we consider here a relatively small task where $d=5$, so that the target distribution $P(G\mid \gD)$ can be evaluated analytically in order to compare it to our approximations given by MaxEnt RL and/or GFlowNets. Unlike in \cref{sec:experiments-treesample}, we also included the modified DB loss and $\pi$-SQL in our comparison since all the states are valid DAGs. We observe in \cref{fig:dag_gfn} that again TB \& PCL on the one hand, but also modified DB \& $\pi$-SQL on the other hand perform very similarly to one another, empirically validating our equivalences established above. Despite a light search over hyperparameters, we found that SAC performs on average significantly worse than other methods, mainly due to instability during training.

\subsection{Phylogenetic tree generation}
\label{sec:experiments-phylogfn}
Finally, we also compared these methods on the larger-scale task of parsimonious phylogenetic tree generation introduced by \citet{zhou2024phylogfn}. Based on biological sequences of different species, the objective is to find phylogenetic trees over those species that require few mutations. A state of the soft MDP corresponds to a collection of trees over a partition of the species, and actions correspond to merging two trees together by adding a root node. Note that all the trees sampled this way have the same size, although they may have different number of trajectories leading to each of them (unlike \cref{fig:treesample}). The energy of a tree $T$ corresponds to the total number of mutations captured in $T$.

\begin{figure*}[t]
  \centering
  
  \begin{adjustbox}{center}
  \begin{minipage}[b]{0.37\textwidth}
    \begin{adjustbox}{scale=1}
\raisebox{68pt}{\begin{tabular}{lccccc}
\toprule
 & TB & PCL & DB & SQL & SAC \\
\midrule
DS1 & 0.7797 & 0.7399 & \textbf{0.9141} & 0.8695 & 0.6003 \\
DS2 & 0.8550 & 0.8309 & 0.8811 & \textbf{0.8922} & 0.7022 \\
DS3 & 0.5833 & 0.6137 & \textbf{0.8649} & 0.8474 & 0.6334 \\
DS4 & 0.9178 & 0.9177 & 0.9285 & 0.8965 & \textbf{0.9320} \\
DS5 & 0.9688 & 0.9690 & 0.9633 & \textbf{0.9712} & 0.9567 \\
DS6 & 0.9526 & 0.9542 & 0.9496 & \textbf{0.9615} & 0.8017 \\
\bottomrule
\end{tabular}}
    \end{adjustbox}
  \end{minipage}
  \hspace*{5em}
  \begin{minipage}[b]{0.5\textwidth}
    \includegraphics[width=\textwidth]{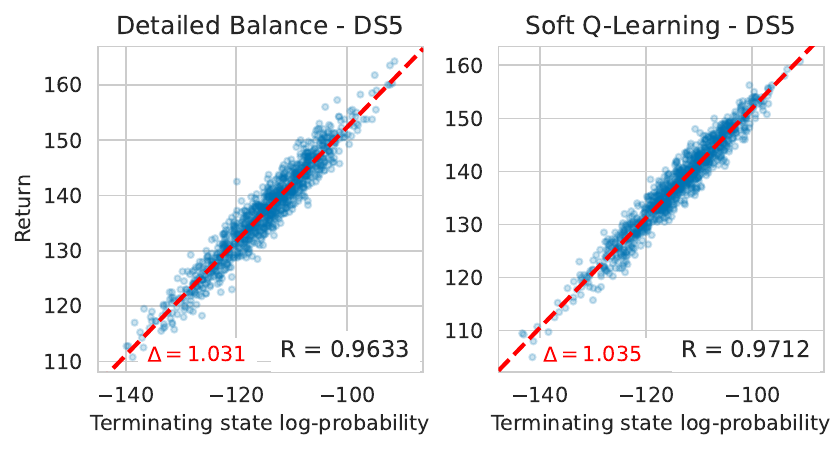}
  \end{minipage}
  \end{adjustbox}
  \caption{Comparison of MaxEnt RL and GFlowNet algorithms on the phylogenetic tree generation task. (Left) Comparison of the performance in terms of the Pearson correlation coefficient between the terminating state log-probability and the return on 1000 randomly sampled trees. (Center) Correlation between the terminating state log-probability found with DB and the return, each point representing a tree, with a best linear fit line and its slope. (Right) Similarly for SQL. The correlation plots for all methods and all datasets are available in \cref{app:details-phylogfn}.}
  \label{fig:phylogfn}
\end{figure*}

In \cref{fig:phylogfn}, we compare the performance of all methods on 6 datasets introduced by \citet{zhou2024phylogfn}, in terms of the correlation between the terminating state log-probabilities $\log \pi(T)$ associated with the learned policy, and the (uncorrected) return $-\gE(T)$, since we should ideally have
\begin{equation}
    \log \pi(T) \approx \log \pi^{*}(T) = -\gE(T) - \log Z,
\end{equation}
based on \cref{thm:maxent-rl-unbiased}. We observe once again that TB \& PCL perform overall similarly to one another, as well as DB \& SQL, confirming our observations made above, this time on larger problems where the partition function is intractable. Similar to \cref{sec:experiments-dag_gfn}, we also found that SAC was often less competitive than DB/SQL, except for DS4.

\section{Discussion}
\label{sec:discussion}
\paragraph{Stochastic environments.} Similar to \citep{mohammadpour2023maxentgfn}, we note that our results here are limited to the case where the soft MDP is deterministic, to match the standard assumptions made in the GFlowNet literature. Although some works have attempted to generalize GFlowNets to stochastic environments \citep{bengio2023gflownetfoundations,pan2023stochasticgfn}, there is no apparent consensus on how to guarantee the existence of an optimal policy $P_{F}$ whose terminating state distribution matches \cref{eq:gibbs-distribution} when an object may be generated in multiple ways (\emph{i.e.}, $\gG$ is a DAG). \citet{jiralerspong2023eflownet} introduced an extension to stochastic environments called \emph{EFlowNet} where an optimal policy is guaranteed to exist, albeit limited to the case where the soft MDP has a tree structure and would therefore bypass the need for any reward correction. The assumption of determinism is generally not limiting though since the structure of the soft MDP is typically designed by an expert based on the problem at hand (depending on the distribution to be approximated).

\paragraph{Environments with equal number of trajectories.} We saw in \cref{sec:sampling-terminating-states} that without correcting the reward function, the optimal policy $\pi_{\mathrm{MaxEnt}}^{*}$ has a terminating state distribution biased towards states with more complete trajectories leading to them. However, there are some situations where \emph{all} the states have an equal number of trajectories leading to them. This is the case of the discrete factor graphs environments studied in \cref{sec:experiments-treesample} for example, where all terminating states can be accessed with exactly $n!$ trajectories since each of the $n$ variables can be assigned a value in any order. In this situation, one can apply MaxEnt RL with the original reward in \cref{eq:reward-function-soft-mdp}, without any correction, and still obtain a terminating state distribution equal to the Gibbs distribution, since this constant can be absorbed into the partition function. Just like the case where $\gG$ is a tree, this must be considered as a special case though, and it is generally recommended to always correct the reward with $P_{B}$.

\paragraph{Unified parametrization of the policy \& state flow.} In all applications of GFlowNets involving the SubTB objective \citep{madan2022subtb,malkin2022trajectorybalance} found in the literature, the forward transition probabilities $P_{F}^{\theta}$ and the state flow function $F_{\phi}$ have always been parametrized with separate models (with possibly a shared backbone) as in \cref{sec:equivalence-pcl-subtb}. However, thanks to the equivalence with PCL established in \cref{prop:equivalence-subtb-pcl}, it is actually possible to parametrize both functions using a single ``Q-function'' thanks to the Unified PCL perspective \citep{nachum2017pcl}; see also \cref{app:equivalence-sql-db} in the context of DB \& SQL. A setting with separate networks is closer to the use of a dueling architecture as highlighted by \citet{tiapkin2023gfnmaxentrl}. In our experiments, we observed that using DB with two separate networks performs overall similarly to SQL with a single Q-network, even though the latter requires fewer parameters to learn. A complete study of the effectiveness of this strategy compared to separate networks is left as future work.

\paragraph{Continuous control \& probabilistic inference.} While this paper focused on discrete distributions, to match the well-studied setting in the GFlowNet literature, the reward correction in \cref{thm:maxent-rl-unbiased} may be extended to cases where $\gX$, along with the state and action spaces of the soft MDP, are continuous spaces as conjectured by \citet{tiapkin2023gfnmaxentrl}. We could establish similar connections with MaxEnt RL objectives by leveraging the extensions of GFlowNets to continuous spaces \citep{anonymous2023cflownets,lahlou2023continuousgfn}. Interestingly, continuous GFlowNets have strong connections with diffusion models \citep{zhang2022unifyinggfn,sendera2024diffusiongfn}, for which RL has been shown to be an effective training method \citep{fan2023ddpmrl,black2023rldiffusion}.

\section{Conclusion \& Future work}
\label{sec:conclusion-future-work}
In this work, we showed that many of the well established objectives from the GFlowNet literature \citep{bengio2023gflownetfoundations,malkin2022trajectorybalance,pan2023forwardlookinggfn} happen to be equivalent to well-known algorithms in MaxEnt RL. Our work is anchored in the recent line of work drawing connections between GFlowNets and MaxEnt RL \citep{tiapkin2023gfnmaxentrl,mohammadpour2023maxentgfn,anonymous2023gfnpolicygradient}, extending the reward correction introduced by \citet{tiapkin2023gfnmaxentrl} to be applicable at the level of complete trajectories. This generalization is significant as it allowed us to establish new connections between MaxEnt RL algorithms and GFlowNet objectives, especially in the presence of intermediate rewards in the underlying MDP.

This perspective makes it possible to integrate in a principled way all the tools from Reinforcement Learning for probabilistic inference over large-scale discrete and structured spaces. Future work should be dedicated to further investigating this intersection, borrowing best practices from the RL literature to enable more efficient inference in that setting. In particular, being able to better explore the state space in order to find modes of the Gibbs distribution \citep{rectorbrooks2023thompsongfn}, for example using targeted exploration strategies developed for RL agents \citep{bellemare2016countbasedexploration,pathak2017curiositydrivenexploration}, would be essential for large-scale applications such as causal structure learning \citep{deleu2022daggflownet} and molecule generation \citep{bengio2021gflownet}, going beyond the capabilities of MaxEnt RL alone in terms of exploration \citep{haarnoja2017sql}.

\section*{Acknowledgements}
\label{sec:acknowledgements}
We would like to thank Valentin Thomas, Michal Valko, Pierre M\'{e}nard, Daniil Tiapkin, and Sobhan Mohammadpour for helpful discussions and comments about this paper. Doina Precup and Yoshua Bengio are CIFAR Senior Fellows. This research was enabled in part by compute resources and software provided by Mila (\href{https://mila.quebec/}{mila.quebec}).

\newpage
\bibliography{references}

\newpage
\onecolumn
\title{Discrete Probabilistic Inference as Control\\in Multi-path Environments\\(Supplementary Material)}

\makeatletter
\renewcommand{\AB@affillist}{}
\renewcommand{\AB@authlist}{}
\setcounter{authors}{0}
\renewcommand{\@thanks}{}
\makeatother

\author[1,3]{Tristan~Deleu}
\author[2]{Padideh~Nouri}
\author[1]{Nikolay~Malkin}
\author[2,4]{Doina~Precup}
\author[1]{Yoshua~Bengio}
\affil[ ]{\protect\hspace*{-2em}Mila -- Quebec AI Institute\protect\\[1em]}
\affil[1]{Universit\'{e} de Montr\'{e}al}
\affil[2]{McGill University}
\affil[3]{Valence Labs}
\affil[4]{Google DeepMind}

\maketitle
\appendix
\vspace*{1em}
\section{Reward correction}
\label{app:reward-correction}

\maxentrlunbiased*
\begin{proof}
    Recall from \citet{ziebart2010maxent,haarnoja2017sql} that the optimal policy maximizing \cref{eq:maxent-rl-problem} is
    \begin{equation}
        \pi^{*}_{\mathrm{MaxEnt}}(s'\mid s) = \exp\left(\frac{1}{\alpha}\big(Q^{*}_{\soft}(s, s') - V^{*}_{\soft}(s)\big)\right),
    \end{equation}
    where the soft value functions $Q^{*}_{\soft}$ and $V^{*}_{\soft}$ satisfy the soft Bellman optimality equations, adapted to our deterministic soft MDP:
    \begin{align}
        Q^{*}_{\soft}(s, s') &= r(s, s') + V^{*}_{\soft}(s')\\
        V^{*}_{\soft}(s') &= \alpha \log \sum_{s''\in\mathrm{Ch}(s')}\exp\left(\frac{1}{\alpha}Q^{*}_{\soft}(s', s'')\right).
    \end{align}
    By definition of the terminating state distribution associated with $\pi^{*}_{\mathrm{MaxEnt}}$ in \cref{eq:terminating-state-probability}, for any terminating state $x\in\gX$:
    \begingroup
    \allowdisplaybreaks
    \begin{align}
        \pi^{*}(x) &= \sum_{\tau: s_{0} \rightsquigarrow x}\prod_{t=0}^{T_{\tau}}\pi^{*}_{\mathrm{MaxEnt}}(s_{t+1}\mid s_{t})\\
        &= \sum_{\tau: s_{0} \rightsquigarrow x}\exp\left[\frac{1}{\alpha}\sum_{t=0}^{T_{\tau}}\big(Q^{*}_{\soft}(s_{t}, s_{t+1}) - V^{*}_{\soft}(s_{t})\big)\right]\\
        &= \sum_{\tau: s_{0} \rightsquigarrow x}\exp\left[\frac{1}{\alpha}\sum_{t=0}^{T_{\tau}}\big(r(s_{t}, s_{t+1}) + V^{*}_{\soft}(s_{t+1}) - V^{*}_{\soft}(s_{t})\big)\right]\\
        &= \sum_{\tau: s_{0} \rightsquigarrow x}\exp\Bigg[\frac{1}{\alpha}\Bigg(\sum_{t=0}^{T_{\tau}}r(s_{t}, s_{t+1}) + \underbrace{V^{*}_{\soft}(s_{f})}_{=\,0} - V^{*}_{\soft}(s_{0})\Bigg)\Bigg]\\
        &= \sum_{\tau: s_{0} \rightsquigarrow x}\exp\left[\frac{1}{\alpha}\left(-\gE(x) + \alpha\sum_{t=0}^{T_{\tau}-1}\log P_{B}(s_{t}\mid s_{t+1}) - V^{*}_{\soft}(s_{0})\right)\right]\\
        &= \exp\left[\frac{1}{\alpha}(-\gE(x) - V^{*}_{\soft}(s_{0}))\right]\underbrace{\sum_{\tau: s_{0} \rightsquigarrow x}\prod_{t=0}^{T_{\tau}-1}P_{B}(s_{t}\mid s_{t+1})}_{=\,1}\label{eq:proof-maxent-rl-unbiased-1}\\
        &= \frac{\exp(-\gE(x)/\alpha)}{\exp(V^{*}_{\soft}(s_{0})/\alpha)} \propto \exp(-\gE(x)/\alpha),
    \end{align}
    \endgroup
    where we used in \cref{eq:proof-maxent-rl-unbiased-1} the fact that $P_{B}$ induces a probability distribution over the complete trajectories leading to any terminating state $x$; see for example \citep[][Lemma 5]{bengio2023gflownetfoundations} for a proof of this result.
\end{proof}

\begin{figure}[hbtp]
    \vspace*{-3em}
    \centering
    \includegraphics{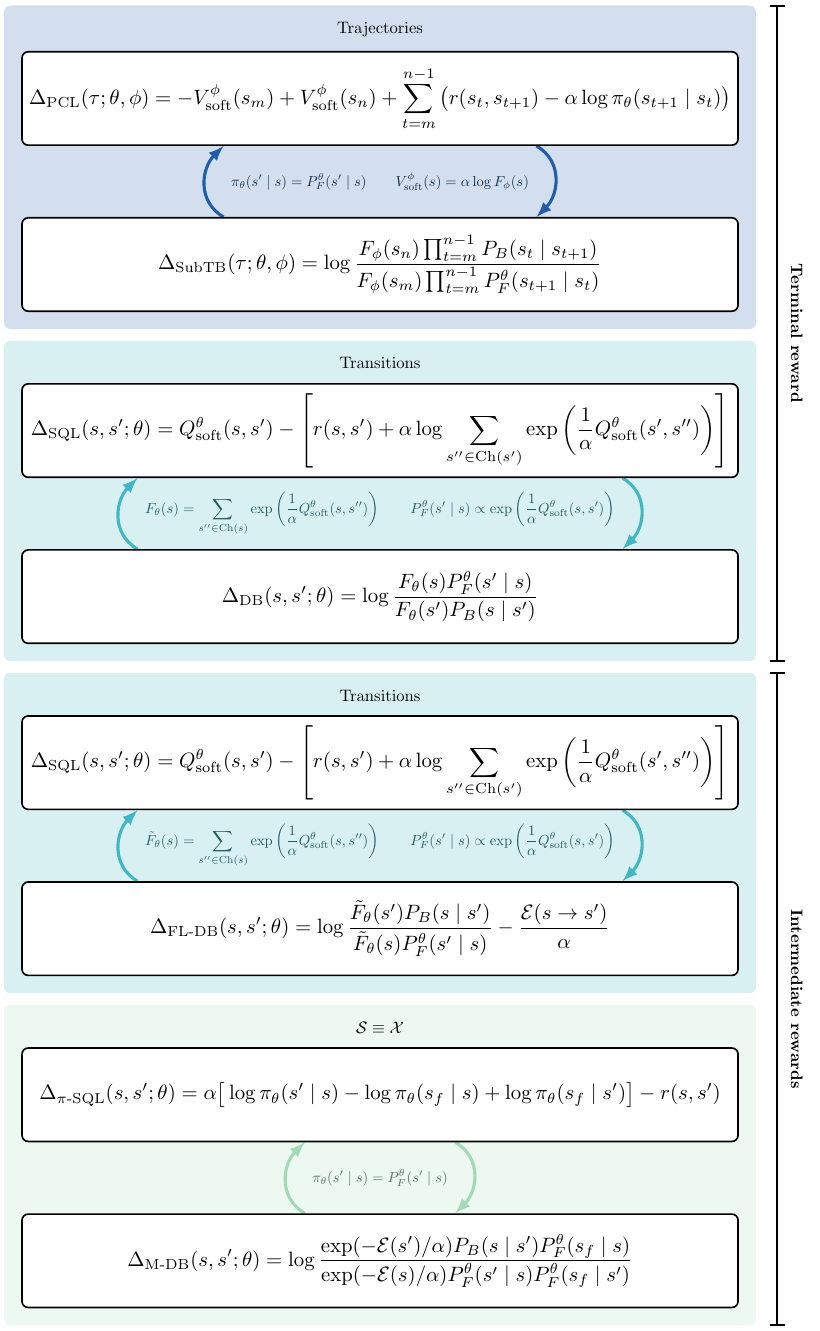}
    \caption{Summary of the equivalences between the MaxEnt RL (top, in each box) and GFlowNet (bottom, in each box) objectives, using the classification of \cref{fig:algrithms-equivalence}. All objectives can be written as $\gL(\cdot) = \frac{1}{2}\E_{\pi_{b}}[\Delta^{2}(\cdot)]$, where $\pi_{b}$ is a distribution over appropriate quantities (\emph{i.e.}, trajectories, or transitions). The \emph{terminal reward} setting corresponds to $r(s_{t}, s_{t+1}) = \alpha \log P_{B}(s_{t}\mid s_{t+1})$ \& $r(s_{T}, s_{f}) = -\gE(s_{T})$ (\cref{sec:equivalence-pcl-subtb}), whereas the \emph{intermediate rewards} setting corresponds to $r(s_{t}, s_{t+1}) = -\gE(s_{t}\rightarrow s_{t+1}) + \alpha \log P_{B}(s_{t}\mid s_{t+1})$ (with $\gE(s_{t}\rightarrow s_{t+1}) = \gE(s_{t+1}) - \gE(s_{t})$ if $\gS \equiv \gX$, \cref{sec:sql-policy-parametrization}) \& $r(s_{T}, s_{f}) = 0$ (\cref{app:equivalence-sql-fl-db})}
    \label{fig:residual-equivalence}
\end{figure}

\section{Equivalences between MaxEnt RL \& GFlowNet objectives}
\label{app:equivalences-maxentrl-gfn-objectives}
In this section, we detail all of our new results establishing equivalences between MaxEnt RL and GFlowNet objectives, along with their proofs. We summarize the results with links to the propositions and proofs in \cref{tab:equivalences-summary}. All the objectives considered in this paper take the form of an (expected) least-square $\gL(\cdot) = \frac{1}{2}\E_{\pi_{b}}[\Delta^{2}(\cdot)]$, where $\Delta(\cdot)$ is a residual term that is algorithm-dependent, and $\pi_{b}$ is a distribution over appropriate quantities (\emph{i.e.}, trajectories, or transitions); see \cref{sec:equivalence-pcl-subtb} for an example with the PCL \& SubTB objectives. In this section, we will work exclusively with residuals for simplicity, instead of the objectives themselves. We summarize the different residuals and their correspondences in \cref{fig:residual-equivalence}.
\begin{table}[h]
    \centering
    \caption{Summary of the equivalence results and their proofs in \cref{app:equivalences-maxentrl-gfn-objectives}.}
    \begin{tabular}{llll}
        \toprule
        MaxEnt RL & GFlowNet & Proposition & Proof\\
        \midrule
        PCL \citep{nachum2017pcl} & SubTB \citep{madan2022subtb} & \cref{prop:equivalence-subtb-pcl} & App.~\ref{app:equivalence-pcl-subtb} \\
        SQL \citep{haarnoja2017sql} & DB \citep{bengio2023gflownetfoundations} & \cref{cor:equivalence-db-sql} & App.~\ref{app:equivalence-sql-db} \\
        SQL${}^{\star}$ \citep{haarnoja2017sql} & FL-DB \citep{pan2023forwardlookinggfn} & \cref{prop:equivalence-sql-fl-db} & App.~\ref{app:equivalence-sql-fl-db} \\
        $\pi$-SQL${}^{\star}$ (\cref{sec:sql-policy-parametrization}) & M-DB \citep{deleu2022daggflownet} & \cref{prop:equivalence-pisql-modified-db} & App.~\ref{app:equivalence-pisql-mdb} \\
        \bottomrule
    \end{tabular}
    \label{tab:equivalences-summary}
\end{table}

\subsection{Equivalence between PCL \& SubTB}
\label{app:equivalence-pcl-subtb}

\equivsubtbpcl*
\begin{proof}
    \hypertarget{proof:equivalence-subtb-pcl}
    Let $\tau = (s_{m}, s_{m+1}, \ldots, s_{n})$ be a subtrajectory, where $s_{n}$ may be the terminal state $s_{f}$. We first recall the definitions of the Path Consistency Learning (PCL; \citealp{nachum2017pcl}) and the Subtrajectory Balance (SubTB; \citealp{madan2022subtb,malkin2022trajectorybalance}) objectives. On the one hand, the PCL objective encourages the consistency between a policy $\pi_{\theta}$ parametrized by $\theta$ and a value function $V^{\phi}_{\soft}$ parametrized by $\phi$:
    \begin{equation}
         \Delta_{\mathrm{PCL}}(\tau; \theta, \phi) = -V_{\soft}^{\phi}(s_{m}) + V_{\soft}^{\phi}(s_{n}) + \sum_{t=m}^{n-1}\big(r(s_{t}, s_{t+1}) - \alpha \log \pi_{\theta}(s_{t+1}\mid s_{t})\big).
    \end{equation}
    On the other hand, the SubTB objective also encourages some form of consistency, but this time between a policy (forward transition probability) $P_{F}^{\theta}$ parametrized by $\theta$ and a flow function $F_{\phi}$ parametrized by $\phi$. We will give the form of its residual further down, as it depends on the trajectory $\tau$. Finally, recall that the reward function of the soft MDP is defined by \cref{eq:distribution-corrected-reward-sparse}, in order to satisfy the reward correction necessary for the application of \cref{thm:maxent-rl-unbiased}, following the same decomposition as in \citep{tiapkin2023gfnmaxentrl}
    \begin{align}
        r(s_{t}, s_{t+1}) &= \alpha \log P_{B}(s_{t}\mid s_{t+1}) & r(s_{T}, s_{f}) &= -\gE(s_{T}).
    \end{align}
    In order to show the equivalence between $\gL_{\mathrm{PCL}}$ and $\gL_{\mathrm{SubTB}}$, we only need to show equivalence of their corresponding residuals, by replacing the reward by its definition above. We will use the correspondence in \cref{eq:equivalence-pcl-subtb-policy-value} between the policy/value function of PCL and the policy/flow function of SubTB. We consider two cases:
    \begin{itemize}[leftmargin=*]
        \item If $s_{n} \neq s_{f}$ is not the terminal state, then the residual for SubTB can be written as
        \begin{equation}
            \Delta_{\mathrm{SubTB}}(\tau;\theta, \phi) = \log \frac{F_{\phi}(s_{n})\prod_{t=m}^{n-1}P_{B}(s_{t}\mid s_{t+1})}{F_{\phi}(s_{m})\prod_{t=m}^{n-1}P_{F}^{\theta}(s_{t+1}\mid s_{t})},
        \end{equation}
        where $P_{B}$ is a backward transition probability. Although it is in general possible to learn $P_{B}$ \citep{malkin2022trajectorybalance}, we will consider it fixed here. Substituting \cref{eq:equivalence-pcl-subtb-policy-value} into the residual $\Delta_{\mathrm{PCL}}$:
        \begin{align}
            \Delta_{\mathrm{PCL}}&(\tau; \theta, \phi) = -V^{\phi}_{\soft}(s_{m}) + V_{\soft}^{\phi}(s_{n}) + \alpha \sum_{t=m}^{n-1}\big(\log P_{B}(s_{t}\mid s_{t+1}) - \log \pi_{\theta}(s_{t+1}\mid s_{t})\big)\nonumber\\
            &= -\alpha \log F_{\phi}(s_{m}) + \alpha \log F_{\phi}(s_{n}) + \alpha \sum_{t=m}^{n-1}\big(\log P_{B}(s_{t}\mid s_{t+1}) - \log P_{F}^{\theta}(s_{t+1}\mid s_{t})\big)\nonumber\\
            &= \alpha \log \frac{F_{\phi}(s_{n})\prod_{t=m}^{n-1}P_{B}(s_{t}\mid s_{t+1})}{F_{\phi}(s_{m})\prod_{t=m}^{n-1}P_{F}^{\theta}(s_{t+1}\mid s_{t})} = \alpha \Delta_{\mathrm{SubTB}}(\tau;\theta, \phi).
        \end{align}
        \item If $s_{n} = s_{f}$, then the residual $\Delta_{\mathrm{SubTB}}$ appearing in the Subtrajectory Balance objective must be written as
        \begin{equation}
            \Delta_{\mathrm{SubTB}}(\tau;\theta, \phi) = \log \frac{\exp(-\gE(s_{n-1})/\alpha)\prod_{t=m}^{n-2}P_{B}(s_{t}\mid s_{t+1})}{F_{\phi}(s_{m})\prod_{t=m}^{n-1}P_{F}^{\theta}(s_{t+1}\mid s_{t})},
        \end{equation}
        since the boundary conditions must also be enforced \citep{malkin2022trajectorybalance}. Moreover, by definition of the value function, we can also enforce that $V_{\soft}^{\phi}(s_{f}) = 0$. Therefore
        \begin{align}
            \Delta_{\mathrm{PCL}}&(\tau; \theta, \phi) = -V_{\soft}^{\phi}(s_{m}) + V_{\soft}^{\phi}(s_{f}) - \big(\gE(s_{n-1}) + \alpha \log \pi_{\theta}(s_{f}\mid s_{n-1})\big)\nonumber\\
            &\qquad \qquad + \alpha \sum_{t=m}^{n-2}\big(\log P_{B}(s_{t}\mid s_{t+1}) - \log \pi_{\theta}(s_{t+1}\mid s_{t})\big)\\
            &= -\alpha \log F_{\phi}(s_{m}) - \gE(s_{n-1}) - \alpha \log P_{F}^{\theta}(s_{f}\mid s_{n-1})\nonumber\\
            &\qquad \qquad + \alpha \sum_{t=m}^{n-2}\big(\log P_{B}(s_{t}\mid s_{t+1}) - \log P_{F}^{\theta}(s_{t+1}\mid s_{t})\big)\\
            &= \alpha \Delta_{\mathrm{SubTB}}(\tau;\theta, \phi).
        \end{align}
        Note that the Trajectory Balance objective, operating only at the level of complete trajectories \citep{malkin2022trajectorybalance}, corresponds to this case where $s_{m} = s_{0}$ is the initial state.
    \end{itemize}
    This concludes the proof, showing that $\gL_{\mathrm{PCL}}(\theta, \phi) = \alpha^{2}\gL_{\mathrm{SubTB}}(\theta, \phi)$.
\end{proof}

\subsection{Equivalence between SQL \& DB}
\label{app:equivalence-sql-db}
The following result establishing the equivalence between the objectives in SQL and DB can be seen as a direct consequence of \cref{prop:equivalence-subtb-pcl}, under the Unified PCL perspective of \citet{nachum2017pcl}. We state and prove this as a standalone result for completeness.
\begin{corollary}
    The Detailed Balance objective (GFlowNet; \citealp{bengio2023gflownetfoundations}) is proportional to the Soft Q-Learning objective (MaxEnt RL; \citealp{haarnoja2017sql}) on the soft MDP with the reward function defined in \cref{eq:distribution-corrected-reward-sparse}, in the sense that $\gL_{\mathrm{SQL}}(\theta) = \alpha^{2}\gL_{\mathrm{DB}}(\theta)$, with the following correspondence
    \begin{align}
        F_{\theta}(s) &= \sum_{s''\in\mathrm{Ch}(s)}\exp\left(\frac{1}{\alpha}Q_{\soft}^{\theta}(s, s'')\right) & P_{F}^{\theta}(s'\mid s) &\propto \exp\left(\frac{1}{\alpha}Q_{\soft}^{\theta}(s, s')\right).
        \label{eq:equivalence-db-sql-correspondence}
    \end{align}
    \label{cor:equivalence-db-sql}
\end{corollary}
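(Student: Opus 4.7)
The plan is to obtain the corollary as a direct specialization of \cref{prop:equivalence-subtb-pcl} to subtrajectories of length $1$, combined with the \emph{Unified PCL} reparametrization of \citet{nachum2017pcl} that collapses the separate policy and value function into a single soft Q-function.

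First, I would recall that the DB residual is exactly (the negation of) the SubTB residual on a length-$1$ subtrajectory: for a non-terminal transition $s \to s'$,
\begin{equation*}
\Delta_{\mathrm{DB}}(s,s';\theta,\phi) = \log \frac{F_\phi(s)\,P_F^\theta(s'\mid s)}{F_\phi(s')\,P_B(s\mid s')} = -\Delta_{\mathrm{SubTB}}((s,s');\theta,\phi),
\end{equation*}
and similarly the boundary version of SubTB with $s_n = s_f$ specializes to the terminating DB residual, which matches the GFlowNet boundary condition $F_\phi(x)\,P_F^\theta(s_f\mid x) = \exp(-\gE(x)/\alpha)$. Because the objectives square the residual, the sign is immaterial and $\gL_{\mathrm{SubTB}} = \gL_{\mathrm{DB}}$ when restricted to length-$1$ subtrajectories.

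Second, on the MaxEnt RL side, I would introduce the Unified PCL parametrization: a single $Q_{\soft}^\theta$ induces both the value function and the policy via
\begin{align*}
V_{\soft}^\theta(s) &= \alpha \log \sum_{s''\in\mathrm{Ch}(s)} \exp\!\big(Q_{\soft}^\theta(s,s'')/\alpha\big), &
\pi_\theta(s'\mid s) &= \exp\!\big((Q_{\soft}^\theta(s,s') - V_{\soft}^\theta(s))/\alpha\big).
\end{align*}
Substituting these into the length-$1$ PCL residual $\Delta_{\mathrm{PCL}}((s,s');\theta) = -V_{\soft}^\theta(s) + V_{\soft}^\theta(s') + r(s,s') - \alpha \log \pi_\theta(s'\mid s)$ causes the $V_{\soft}^\theta(s)$ terms to cancel, leaving $V_{\soft}^\theta(s') + r(s,s') - Q_{\soft}^\theta(s,s')$, which is (the negation of) the SQL Bellman residual. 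The terminal case $s' = s_f$ follows the same algebra using $V_{\soft}^\theta(s_f) = 0$, recovering the SQL target $Q_{\soft}^\theta(s,s_f) = r(s,s_f)$.

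Third, I would check that the correspondence in \cref{eq:equivalence-db-sql-correspondence} is exactly what the PCL/SubTB correspondence $V_{\soft}^\phi(s) = \alpha \log F_\phi(s)$ and $\pi_\theta = P_F^\theta$ from \cref{prop:equivalence-subtb-pcl} becomes under the Unified PCL reparametrization: setting $\alpha \log F_\theta(s) = V_{\soft}^\theta(s) = \alpha \log \sum_{s''} \exp(Q_{\soft}^\theta(s,s'')/\alpha)$ and $P_F^\theta(s'\mid s) = \pi_\theta(s'\mid s) \propto \exp(Q_{\soft}^\theta(s,s')/\alpha)$ yields precisely \cref{eq:equivalence-db-sql-correspondence}. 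Combining these observations gives $\Delta_{\mathrm{SQL}}^2 = \alpha^2 \Delta_{\mathrm{DB}}^2$ pointwise, hence $\gL_{\mathrm{SQL}}(\theta) = \alpha^2 \gL_{\mathrm{DB}}(\theta)$ after taking expectation under the common behavioral distribution $\pi_b$.

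The only delicate point, which I expect to be the main bookkeeping obstacle, is the treatment of the terminating transition: one must verify that the SubTB boundary form (which replaces $F_\phi(s_f)$ by $\exp(-\gE(s_{n-1})/\alpha)$) matches the SQL terminal residual under the Unified PCL tying, since there $Q_{\soft}^\theta(s,s_f)$ plays the role of the log-terminating-flow and the standard $V_{\soft}^\theta(s_f) = 0$ convention must be reconciled with the GFlowNet boundary condition. Once this case is handled, the non-terminal case is an immediate substitution.
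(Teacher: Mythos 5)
Your proposal is correct, but it takes a different route from the paper's written proof. The paper proves \cref{cor:equivalence-db-sql} as a standalone computation: it starts from the SQL residual $\Delta_{\mathrm{SQL}}(s,s';\theta) = Q_{\soft}^{\theta}(s,s') - r(s,s') - V_{\soft}^{\theta}(s')$, substitutes the reward from \cref{eq:distribution-corrected-reward-sparse}, adds and subtracts $\alpha \log \sum_{s''\in\mathrm{Ch}(s)}\exp(Q_{\soft}^{\theta}(s,s'')/\alpha)$, and reads off $\alpha\Delta_{\mathrm{DB}}$ directly via the correspondence \cref{eq:equivalence-db-sql-correspondence}, treating the non-terminal and terminal transitions separately. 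You instead compose two reductions: (i) the DB residual (including its boundary form) is the negation of the SubTB residual on length-$1$ subtrajectories, and (ii) the Unified PCL tying $\pi_{\theta}(s'\mid s)=\exp((Q_{\soft}^{\theta}(s,s')-V_{\soft}^{\theta}(s))/\alpha)$ collapses the length-$1$ PCL residual to $-\Delta_{\mathrm{SQL}}$, so that \cref{prop:equivalence-subtb-pcl} delivers $\Delta_{\mathrm{SQL}}=\alpha\Delta_{\mathrm{DB}}$ and the squared objectives pick up the $\alpha^{2}$. This is precisely the route the paper gestures at in the sentence preceding the corollary ("a direct consequence of \cref{prop:equivalence-subtb-pcl} under the Unified PCL perspective") but does not execute; your checks of the sign cancellation, of the identification $V_{\soft}^{\theta}(s)=\alpha\log F_{\theta}(s)$ becoming \cref{eq:equivalence-db-sql-correspondence}, and of the terminal case with $V_{\soft}^{\theta}(s_{f})=0$ against the SubTB boundary form are exactly the bookkeeping needed, and they go through. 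What your approach buys is reuse of the already-proved proposition and a cleaner view of how the corollary sits in the hierarchy of \cref{fig:algrithms-equivalence}; what the paper's direct proof buys is a self-contained verification that explicitly exhibits the Q-function-to-flow correspondence without requiring the reader to track how the tied parametrization threads through \cref{prop:equivalence-subtb-pcl}.
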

\begin{proof}
    \hypertarget{proof:equivalence-db-sql}
    Let $s \rightarrow s'$ be a transition in the soft MDP, where $s'$ may be the terminal state. Recall that the residual in the SQL objective depends on a Q-function $Q^{\theta}_{\mathrm{soft}}$ parametrized by $\theta$:
    \begin{align}
        \Delta_{\mathrm{SQL}}(s, s'; \theta) &= Q_{\soft}^{\theta}(s, s') - \big(r(s, s') + V_{\soft}^{\theta}(s')\big),\\
        \mathrm{where}\quad V_{\soft}^{\theta}(s') &\triangleq \alpha \log \sum_{s''\in \mathrm{Ch}(s')}\exp\left(\frac{1}{\alpha}Q_{\soft}^{\theta}(s', s'')\right).
    \end{align}
    In the case where $s' = s_{f}$ is the terminal state, then $V^{\theta}_{\mathrm{soft}}(s') = 0$. On the other hand, the exact form of the residual in the DB objective will be given further down, but always depends on a forward transition probability $P_{F}^{\theta}$ and a state flow function $F_{\theta}$, which we assume are sharing parameters $\theta$. We consider two cases:
    \begin{itemize}[leftmargin=*]
        \item If $s' \neq s_{f}$ is not the terminal state, then the residual in the DB objective is given by
        \begin{equation}
            \Delta_{\mathrm{DB}}(s, s';\theta) = \log \frac{F_{\theta}(s)P_{F}^{\theta}(s'\mid s)}{F_{\theta}(s')P_{B}(s\mid s')}.
        \end{equation}
        With the definition of the reward function in \cref{eq:distribution-corrected-reward-sparse}, we know that $r(s, s') = \alpha \log P_{B}(s\mid s')$. We can therefore show that the residuals of SQL and DB are proportional to one-another:
        \begin{align}
            \Delta&_{\mathrm{SQL}}(s, s';\theta) = Q^{\theta}_{\mathrm{soft}}(s, s') - r(s, s') - \alpha \log \sum_{s''\in\mathrm{Ch}(s')}\exp\left[\frac{1}{\alpha}Q_{\mathrm{soft}}^{\theta}(s', s'')\right]\\
            &= Q_{\mathrm{soft}}^{\theta}(s, s') - \alpha \log P_{B}(s\mid s') - \alpha \log \sum_{s''\in\mathrm{Ch}(s')}\exp\left[\frac{1}{\alpha}Q_{\mathrm{soft}}^{\theta}(s', s'')\right]\\
            &= \alpha \log\left(\exp\left(\frac{1}{\alpha}Q_{\mathrm{soft}}^{\theta}(s, s')\right)\right) - \alpha \log P_{B}(s\mid s') - \alpha \log \sum_{s''\in\mathrm{Ch}(s')}\exp\left[\frac{1}{\alpha}Q_{\mathrm{soft}}^{\theta}(s', s'')\right]\nonumber\\
            &\quad - \alpha \log \sum_{s''\in\mathrm{Ch}(s)}\exp\left[\frac{1}{\alpha}Q_{\mathrm{soft}}^{\theta}(s, s'')\right] + \alpha \log \sum_{s''\in\mathrm{Ch}(s)}\exp\left[\frac{1}{\alpha}Q_{\mathrm{soft}}^{\theta}(s, s'')\right]\\
            &= \alpha \Big[\log P_{F}^{\theta}(s'\mid s) + \log F_{\theta}(s) - \log P_{B}(s\mid s') - \log F_{\theta}(s')\Big]\\
            &= \alpha \Delta_{\mathrm{DB}}(s, s';\theta),
        \end{align}
        where we used the following correspondence between $Q_{\mathrm{soft}}^{\theta}$, $P_{F}^{\theta}$, and $F_{\theta}$:
        \begin{align}
            F_{\theta}(s') &= \sum_{s\in\mathrm{Pa}(s')}\exp\left(\frac{1}{\alpha}Q_{\soft}^{\theta}(s, s')\right) & P_{F}^{\theta}(s'\mid s) &\propto \exp\left(\frac{1}{\alpha}Q_{\soft}^{\theta}(s, s')\right).
        \end{align}

        \item If $s' = s_{f}$ is the terminal state, then the residual in the DB objective is
        \begin{equation}
            \Delta_{\mathrm{DB}}(s, s_{f};\theta) = \log \frac{F_{\theta}(s)P_{F}^{\theta}(s_{f}\mid s)}{\exp(-\gE(s)/\alpha)}.
        \end{equation}
        Again with our definition of the reward function of the soft MDP in \cref{eq:distribution-corrected-reward-sparse}, we know that the reward of the terminating transition is $r(s, s_{f}) = -\gE(s)$. We can therefore also show the relation between the two residuals in this case:
        \begin{align}
            \Delta_{\mathrm{SQL}}(s, s_{f};\theta) &= Q_{\mathrm{soft}}^{\theta}(s, s_{f}) - r(s, s_{f})\\
            &= Q_{\mathrm{soft}}^{\theta}(s, s_{f}) + \gE(s)\\
            &= \alpha \log \left(\exp\left(\frac{1}{\alpha}Q_{\mathrm{soft}}^{\theta}(s, s_{f})\right)\right) + \gE(s)\\
            &\quad - \alpha \log \sum_{s''\in\mathrm{Ch}(s)}\exp\left[\frac{1}{\alpha}Q_{\mathrm{soft}}^{\theta}(s, s'')\right] + \alpha \log \sum_{s''\in\mathrm{Ch}(s)}\exp\left[\frac{1}{\alpha}Q_{\mathrm{soft}}^{\theta}(s, s'')\right]\nonumber\\
            &= \alpha \left[\log P_{F}^{\theta}(s_{f}\mid s) + \log F_{\theta}(s) + \frac{\gE(s)}{\alpha}\right]\\
            &= \alpha \Delta_{\mathrm{DB}}(s, s_{f};\theta),
        \end{align}
        where we used the same correspondence between $Q_{\mathrm{soft}}^{\theta}$, $P_{F}^{\theta}$, and $F_{\theta}$ as above.
    \end{itemize}
    This concludes the proof, showing that $\gL_{\mathrm{SQL}}(\theta) = \alpha^{2}\gL_{\mathrm{DB}}(\theta)$.
\end{proof}
Note that \citet{tiapkin2023gfnmaxentrl} established a similar connection between SQL and DB through a dueling architecture perspective \citep{wang2016dueling}, where the Q-function must be decomposed as (with notation adapted to this paper)
\begin{equation}
    Q^{\theta}(s, s') = V^{\theta}(s) + A^{\theta}(s, s') - \log \sum_{s''\in\mathrm{Ch}(s)}\exp\big(A^{\theta}(s, s'')\big),
\end{equation}
where $A^{\theta}(s, s')$ is an advantage function. The connection they establish is then between $V^{\theta}$ and the state flow on the one hand, and $A^{\theta}$ and the policy on the other hand, via
\begin{align}
    \log F_{\theta}(s) &= V^{\theta}(s) = \log \sum_{s''\in\mathrm{Ch}(s)}\exp\big(Q^{\theta}(s, s'')\big)\label{eq:tiapkin-correspondance-sql-db-1}\\
    \log P_{F}^{\theta}(s'\mid s) &= A^{\theta}(s, s') - \log \sum_{s''\in\mathrm{Ch}(s)}\exp\big(A^{\theta}(s, s'')\big). \label{eq:tiapkin-correspondance-sql-db-2}
\end{align}
This differs from our result in that \cref{cor:equivalence-db-sql} does not explicitly require a separate advantage function. However, we note that both results are effectively equivalent to one another, since \cref{eq:tiapkin-correspondance-sql-db-1} directly corresponds to \cref{eq:equivalence-db-sql-correspondence} in \cref{cor:equivalence-db-sql} (with $\alpha = 1$), and starting from \cref{eq:tiapkin-correspondance-sql-db-2}:
\begin{align}
    \log P_{F}^{\theta}(s'\mid s) &= A^{\theta}(s, s') - \log \sum_{s''\in\mathrm{Ch}(s)}\exp\big(A^{\theta}(s, s'')\big)\\
    &= \Bigg[Q^{\theta}(s, s') - V^{\theta}(s) + \log \sum_{s''\in\mathrm{Ch}(s)}\exp\big(A^{\theta}(s, s'')\big)\Bigg] - \log \sum_{s''\in\mathrm{Ch}(s)}\exp\big(A^{\theta}(s, s'')\big)\nonumber\\
    &= Q^{\theta}(s, s') - V^{\theta}(s) = Q^{\theta}(s, s') - \log \sum_{s''\in\mathrm{Ch}(s)}\exp\big(Q^{\theta}(s, s'')\big),
\end{align}
which also corresponds to \cref{eq:equivalence-db-sql-correspondence} in \cref{cor:equivalence-db-sql}. The connection through a dueling architecture is closer to DB when the policy and the state flow network are parametrized by two separate networks.

\subsection{Equivalence between \texorpdfstring{$\pi$}{pi}-SQL and Modified DB}
\label{app:equivalence-pisql-mdb}

\begin{proposition}
    \label{prop:policy-parametrization-sql}
    Assume that all the states of the soft MDP are terminating (\emph{i.e.}, connected to the terminal state $s_{f}$), and such that for all $s\in\gS$, the reward function satisfies $r(s, s_{f}) = 0$. Then the objective of Soft Q-Learning \citep{haarnoja2017sql} can be written as a function of a policy $\pi_{\theta}$ parametrized by $\theta$. This objective is given by $\gL_{\pi\textrm{-}\mathrm{SQL}}(\theta) = \frac{1}{2}\E_{\pi_{b}}[\Delta_{\pi\textrm{-}\mathrm{SQL}}^{2}(s, s'; \theta)]$, where $\pi_{b}$ is an arbitrary policy over transitions $s\rightarrow s'$ such that $s' \neq s_{f}$, and
    \begin{equation}
        \Delta_{\pi\textrm{-}\mathrm{SQL}}(s, s'; \theta) = \alpha\big[\log \pi_{\theta}(s'\mid s) - \log \pi_{\theta}(s_{f}\mid s) + \log \pi_{\theta}(s_{f}\mid s')\big] - r(s, s').
    \end{equation}
\end{proposition}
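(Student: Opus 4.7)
The plan is to start from the Soft Q-Learning residual
\begin{equation*}
\Delta_{\mathrm{SQL}}(s, s'; \theta) = Q_{\soft}^{\theta}(s, s') - r(s, s') - V_{\soft}^{\theta}(s'),
\qquad
V_{\soft}^{\theta}(s) = \alpha \log \sum_{s''\in\mathrm{Ch}(s)}\exp\bigl(Q_{\soft}^{\theta}(s, s'')/\alpha\bigr),
\end{equation*}
and to reparametrize $Q_{\soft}^{\theta}$ through a policy via the standard MaxEnt identity $\pi_{\theta}(s'\mid s) = \exp\bigl((Q_{\soft}^{\theta}(s, s') - V_{\soft}^{\theta}(s))/\alpha\bigr)$, equivalently $Q_{\soft}^{\theta}(s, s') = V_{\soft}^{\theta}(s) + \alpha \log \pi_{\theta}(s'\mid s)$.

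The key leverage point is the combined assumption that every state is terminating and $r(s, s_f) = 0$. Using the convention $V_{\soft}^{\theta}(s_f) = 0$, the SQL residual at the terminating transition reduces to $\Delta_{\mathrm{SQL}}(s, s_f; \theta) = Q_{\soft}^{\theta}(s, s_f)$. I would therefore fix the (over-parametrized) Q-function by the normalization $Q_{\soft}^{\theta}(s, s_f) \equiv 0$; since $s_f$ is a child of every state, this simultaneously makes the terminal-transition residual vanish (so only transitions $s \to s'$ with $s' \neq s_f$ need to be considered in $\pi_b$, matching the statement) and pins down the value function in closed form. Evaluating the MaxEnt identity at $s' = s_f$ gives
\begin{equation*}
0 = Q_{\soft}^{\theta}(s, s_f) = V_{\soft}^{\theta}(s) + \alpha \log \pi_{\theta}(s_f \mid s),
\qquad\text{hence}\qquad
V_{\soft}^{\theta}(s) = -\alpha \log \pi_{\theta}(s_f \mid s).
\end{equation*}

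The remainder is purely algebraic. For any non-terminating transition $s \to s'$, I substitute $Q_{\soft}^{\theta}(s, s') = -\alpha \log \pi_{\theta}(s_f \mid s) + \alpha \log \pi_{\theta}(s'\mid s)$ and $V_{\soft}^{\theta}(s') = -\alpha \log \pi_{\theta}(s_f\mid s')$ into $\Delta_{\mathrm{SQL}}$, which collapses directly to
\begin{equation*}
\alpha\big[\log \pi_{\theta}(s'\mid s) - \log \pi_{\theta}(s_f\mid s) + \log \pi_{\theta}(s_f\mid s')\big] - r(s, s'),
\end{equation*}
i.e.\ exactly $\Delta_{\pi\textrm{-}\mathrm{SQL}}(s, s'; \theta)$, and taking the expectation under any $\pi_b$ supported on non-terminating transitions yields $\gL_{\pi\textrm{-}\mathrm{SQL}}(\theta)$ in the announced form.

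The step I expect to require the most care is the justification of the gauge $Q_{\soft}^{\theta}(s, s_f) \equiv 0$. What needs to be argued is that this is a reparametrization rather than a loss of expressivity: under the log-sum-exp definition of $V_{\soft}^{\theta}$, normalized Q-functions of this kind are in one-to-one correspondence with strictly positive policies $\pi_{\theta}(\cdot\mid s)$, so any target policy can be represented. Crucially, this identification is compatible with the SQL \emph{loss} only because $r(s, s_f) = 0$ makes the terminal-transition residual coincide with $Q_{\soft}^{\theta}(s, s_f)$; without that assumption, dropping those transitions from $\pi_b$ would be a genuine change of objective rather than an automatic consequence of the parametrization.
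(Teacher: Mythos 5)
Your proposal is correct and follows essentially the same route as the paper's proof: fix the gauge $Q_{\soft}^{\theta}(s, s_{f}) = 0$ (legitimate because $r(s, s_{f}) = 0$), define $\pi_{\theta}(s'\mid s) = \exp\big((Q_{\soft}^{\theta}(s,s') - V_{\soft}^{\theta}(s))/\alpha\big)$ so that $V_{\soft}^{\theta}(s) = -\alpha\log\pi_{\theta}(s_{f}\mid s)$, and substitute into the SQL residual. Your added remarks that the terminal-transition residual vanishes under this gauge (justifying restricting $\pi_{b}$ to $s'\neq s_{f}$) and that the normalization is a reparametrization rather than a loss of expressivity are slightly more explicit than the paper's write-up but do not change the argument.
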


\begin{proof}
    Recall that the objective of Soft Q-Learning can be written in terms of a Q-function $Q_{\soft}^{\theta}$ parametrized by $\theta$ as $\gL_{\mathrm{SQL}}(\theta) = \frac{1}{2}\E_{\pi_{b}}[\Delta_{\mathrm{SQL}}^{2}(s, s'; \theta)]$, with
    \begin{align}
        \Delta_{\mathrm{SQL}}(s, s'; \theta) &= Q_{\soft}^{\theta}(s, s') - \big(r(s, s') + V_{\soft}^{\theta}(s')\big),\label{eq:proof-residual-sql}\\
        \mathrm{where}\quad V_{\soft}^{\theta}(s') &\triangleq \alpha \log \sum_{s''\in \mathrm{Ch}(s')}\exp\left[\frac{1}{\alpha}Q_{\soft}^{\theta}(s', s'')\right].
    \end{align}
    Since we assume that $r(s, s_{f}) = 0$, we can enforce the fact that $Q_{\soft}^{\theta}(s, s_{f}) = 0$ in our parametrization of the Q-function. If we define a policy $\pi_{\theta}$ as
    \begin{equation}
        \pi_{\theta}(s'\mid s) \triangleq \exp\left[\frac{1}{\alpha}\big(Q_{\soft}^{\theta}(s, s') - V_{\soft}^{\theta}(s)\big)\right],
    \end{equation}
    then we have in particular $\pi_{\theta}(s_{f}\mid s) = \exp(-V_{\soft}^{\theta}(s)/\alpha)$, based on our observation above. Moreover, we can write the different in value functions appearing in \cref{eq:proof-residual-sql} as
    \begin{align}
        Q_{\soft}^{\theta}(s, s') - V_{\soft}^{\theta}(s') &= Q_{\soft}^{\theta}(s, s') - V_{\soft}^{\theta}(s) + V_{\soft}^{\theta}(s) - V_{\soft}^{\theta}(s')\\
        &= \alpha\big[\log \pi_{\theta}(s'\mid s) - \log \pi_{\theta}(s_{f}\mid s) + \log \pi_{\theta}(s_{f}\mid s')\big]
    \end{align}
    which concludes the proof.
\end{proof}

\equivpisqlmodifieddb*
\begin{proof}
    \hypertarget{proof:equivalence-pisql-modified-db}
    Let $s \rightarrow s'$ be a transition in the soft MDP. In order to show the equivalence between $\gL_{\pi\textrm{-}\mathrm{SQL}}$ and $\gL_{\mathrm{M}\textrm{-}\mathrm{DB}}$, it is sufficient to show the equivalence between their corresponding residuals \cref{eq:residual-pisql} and \cref{eq:residual-modified-db}. Recall that the reward function in \cref{eq:distribution-corrected-reward-dense} is defined by
    \begin{align}
        r(s_{t}, s_{t+1}) &= \gE(s_{t}) - \gE(s_{t+1}) + \alpha \log P_{B}(s_{t}\mid s_{t+1}) & r(s_{T}, s_{f}) &= 0.
    \end{align}
    Replacing the reward in the residual $\Delta_{\pi\textrm{-}\mathrm{SQL}}$, we get
    \begin{align*}
        \Delta_{\pi\textrm{-}\mathrm{SQL}}&(s, s';\theta) = \alpha\big[\log \pi_{\theta}(s'\mid s) - \log \pi_{\theta}(s_{f}\mid s) + \log \pi_{\theta}(s_{f}\mid s')\big] - r(s, s')\\
        &= \alpha \big[\log \pi_{\theta}(s'\mid s) - \log \pi_{\theta}(s_{f}\mid s) + \log \pi_{\theta}(s_{f}\mid s')\big] - \big[\gE(s) - \gE(s') + \alpha \log P_{B}(s\mid s')\big]\\
        &= \alpha \left[-\frac{\gE(s)}{\alpha} + \log \pi_{\theta}(s'\mid s) + \log \pi_{\theta}(s_{f}\mid s') + \frac{\gE(s')}{\alpha} - \log P_{B}(s\mid s') - \log \pi_{\theta}(s_{f}\mid s)\right]\\
        &= \alpha \left[-\frac{\gE(s)}{\alpha} + \log P_{F}^{\theta}(s'\mid s) + \log P_{F}^{\theta}(s_{f}\mid s') + \frac{\gE(s')}{\alpha} - \log P_{B}(s\mid s') - \log P_{F}^{\theta}(s_{f}\mid s)\right]\\
        &= -\alpha \log \frac{\exp(-\gE(s')/\alpha)P_{B}(s\mid s')P_{F}^{\theta}(s_{f}\mid s)}{\exp(-\gE(s)/\alpha)P_{F}^{\theta}(s'\mid s)P_{F}^{\theta}(s_{f}\mid s')} = -\alpha \Delta_{\mathrm{M}\textrm{-}\mathrm{DB}}(s, s'; \theta)
    \end{align*}
    This conclude the proof, showing that $\gL_{\pi\textrm{-}\mathrm{SQL}}(\theta) = \alpha^{2}\gL_{\mathrm{M}\textrm{-}\mathrm{DB}}(\theta)$.
\end{proof}

\subsection{Equivalence between SQL and Forward-Looking DB}
\label{app:equivalence-sql-fl-db}
We will now generalize the result of \cref{prop:equivalence-pisql-modified-db} to the case where $\gX \not\equiv \gS$, but where intermediate rewards are still available along the trajectory. We will assume that for any complete trajectory $\tau = (s_{0}, s_{1}, \ldots, s_{T}, s_{f})$, the energy function at $s_{T}$ can be decomposed into a sum of intermediate rewards \citep{pan2023forwardlookinggfn}
\begin{equation}
    \gE(s_{T}) = \sum_{t=0}^{T-1}\gE(s_{t}\rightarrow s_{t+1}),
\end{equation}
where we overload the notation $\gE$ for simplicity. In that case, we can define the corrected reward as follows in order to satisfy the conditions of \cref{thm:maxent-rl-unbiased}
\begin{align}
    r(s_{t}, s_{t+1}) &= -\gE(s_{t}\rightarrow s_{t+1}) + \alpha \log P_{B}(s_{t}\mid s_{t+1}) &&& r(s_{T}, s_{f}) &= 0.
    \label{eq:reward-shaping-intermediate}
\end{align}
This type of reward shaping is similar to the one introduced in \cref{sec:sql-policy-parametrization}. The Forward-Looking Detailed Balance loss (FL-DB; \citealp{pan2023forwardlookinggfn}) is defined similarly to DB, with the exception that the flow function corresponds to the unknown offset relative to $\gE(s_{t}\rightarrow s_{t+1})$, which is known and therefore does not need to be learned. For some transition $s\rightarrow s'$ such that $s'\neq s_{f}$, the corresponding residual can be written as
\begin{equation}
    \Delta_{\mathrm{FL}\textrm{-}\mathrm{DB}}(s, s'; \theta) = \log \frac{\tilde{F}_{\theta}(s')P_{B}(s\mid s')}{\tilde{F}_{\theta}(s)P_{F}^{\theta}(s'\mid s)} - \frac{\gE(s\rightarrow s')}{\alpha},
    \label{eq:fl-db-residual}
\end{equation}
where $P_{F}^{\theta}$ is the policy (forward transition probability), and $\tilde{F}_{\theta}$ is an offset state-flow function, parametrized by $\theta$. Note that with FL-DB, there is no longer an explicit residual for the boundary condition, unlike in DB, since this is captured through \cref{eq:fl-db-residual} already. The following proposition establishes an equivalence between SQL and FL-DB, similar to \cref{cor:equivalence-db-sql} \& \cref{prop:equivalence-pisql-modified-db}.

\begin{proposition}
    \label{prop:equivalence-sql-fl-db}
    The Forward-Looking Detailed Balance objective (GFlowNet; \citealp{pan2023forwardlookinggfn}) is proportional to the Soft Q-Learning objective (MaxEnt RL; \citealp{haarnoja2017sql}) on the soft MDP with the reward function defined in \cref{eq:reward-shaping-intermediate}, in the sense that $\gL_{\mathrm{SQL}}(\theta) = \alpha^{2}\gL_{\mathrm{FL}\textrm{-}\mathrm{DB}}(\theta)$, with the following correspondence
    \begin{align}
        \tilde{F}_{\theta}(s) &= \sum_{s''\in\mathrm{Ch}(s)}\exp\left(\frac{1}{\alpha}Q_{\soft}^{\theta}(s, s'')\right) &&& P_{F}^{\theta}(s'\mid s) \propto \exp\left(\frac{1}{\alpha}Q_{\soft}^{\theta}(s, s')\right).
        \label{eq:correspondence-sql-fl-db}
    \end{align}
\end{proposition}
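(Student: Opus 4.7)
The plan is to mirror the strategy used in the proof of \cref{cor:equivalence-db-sql}: since both objectives have the form $\frac{1}{2}\E_{\pi_b}[\Delta^2(\cdot)]$, it suffices to show that the per-transition residuals differ only by a multiplicative factor of $\alpha$ (up to a sign, which disappears upon squaring). I would split the analysis by whether the transition $s\to s'$ terminates, and use the correspondence \cref{eq:correspondence-sql-fl-db} to translate between the Q-function parametrization of SQL and the $(P_F^\theta,\tilde F_\theta)$ parametrization of FL-DB. In particular, \cref{eq:correspondence-sql-fl-db} directly gives $V_\soft^\theta(s)=\alpha\log \tilde F_\theta(s)$ and $\alpha\log P_F^\theta(s'\mid s)=Q_\soft^\theta(s,s')-V_\soft^\theta(s)$, which are exactly the two identities needed.

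For a non-terminating transition, $s'\neq s_f$, I would start from
\[
\Delta_{\mathrm{SQL}}(s,s';\theta)=Q_\soft^\theta(s,s')-r(s,s')-V_\soft^\theta(s'),
\]
substitute the shaped reward $r(s,s')=-\gE(s\to s')+\alpha\log P_B(s\mid s')$ from \cref{eq:reward-shaping-intermediate}, and introduce a telescoping $\pm V_\soft^\theta(s)$. Applying the two identities above turns the $Q-V(s)$ piece into $\alpha\log P_F^\theta(s'\mid s)$ and the $V(s)-V(s')$ piece into $\alpha\log(\tilde F_\theta(s)/\tilde F_\theta(s'))$. Collecting terms and factoring out $\alpha$ yields
\[
\Delta_{\mathrm{SQL}}(s,s';\theta)=-\alpha\left[\log\frac{\tilde F_\theta(s')P_B(s\mid s')}{\tilde F_\theta(s)P_F^\theta(s'\mid s)}-\frac{\gE(s\to s')}{\alpha}\right]=-\alpha\,\Delta_{\mathrm{FL}\text{-}\mathrm{DB}}(s,s';\theta),
\]
which, after squaring and taking expectations, gives the claimed scaling $\gL_{\mathrm{SQL}}=\alpha^2\gL_{\mathrm{FL}\text{-}\mathrm{DB}}$.

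Terminating transitions require a small separate argument because FL-DB has no explicit boundary residual, whereas SQL in principle sums over all transitions. The reward function \cref{eq:reward-shaping-intermediate} sets $r(s_T,s_f)=0$, and by convention $V_\soft^\theta(s_f)=0$. As in \cref{prop:policy-parametrization-sql}, we may parametrize the soft Q-function so that $Q_\soft^\theta(s,s_f)=0$, which is consistent with the correspondence in \cref{eq:correspondence-sql-fl-db} (the terminating action contributes a factor $\exp(0)=1$ to $\tilde F_\theta(s)$). Under this convention, $\Delta_{\mathrm{SQL}}(s,s_f;\theta)=0$ identically, so terminating transitions contribute nothing to $\gL_{\mathrm{SQL}}$ and the expectation over $\pi_b$ can be restricted to the same non-terminating transitions that FL-DB uses, preserving the equivalence.

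The main obstacle, which is mostly bookkeeping rather than mathematical difficulty, is ensuring that the FL-DB residual and the SQL residual are compared over the same set of transitions and under consistent parametric conventions; in particular, the boundary-free form of FL-DB must be reconciled with SQL's Bellman-style residual at terminating transitions. Once $Q_\soft^\theta(\cdot,s_f)\equiv 0$ is fixed, this reconciliation is automatic, and the remainder of the proof is the straightforward algebraic manipulation outlined above.
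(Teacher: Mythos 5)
Your proof is correct and follows essentially the same route as the paper's: substitute the shaped reward \cref{eq:reward-shaping-intermediate} into the SQL residual, telescope $\pm V_{\soft}^{\theta}(s)$, and apply the correspondence \cref{eq:correspondence-sql-fl-db} to obtain $\Delta_{\mathrm{SQL}}(s, s';\theta) = -\alpha\,\Delta_{\mathrm{FL}\textrm{-}\mathrm{DB}}(s, s';\theta)$ for $s'\neq s_{f}$, hence $\gL_{\mathrm{SQL}}(\theta) = \alpha^{2}\gL_{\mathrm{FL}\textrm{-}\mathrm{DB}}(\theta)$. Your extra remark about terminating transitions (enforcing $Q_{\soft}^{\theta}(\cdot, s_{f}) = 0$ so they contribute nothing) is a harmless and consistent addition; the paper handles this implicitly by restricting both objectives to transitions with $s' \neq s_{f}$, as FL-DB has no boundary residual.
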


\begin{proof}
    The proof is similar to the one in \cref{app:equivalence-sql-db}. Let $s\rightarrow s'$ be a transition in the soft MDP, where $s'\neq s_{f}$. Recall that the residual in the SQL objective is
    \begin{align}
        \Delta_{\mathrm{SQL}}(s, s'; \theta) &= Q_{\soft}^{\theta}(s, s') - \big(r(s, s') + V_{\soft}^{\theta}(s')\big),\\
        \mathrm{where}\quad V_{\soft}^{\theta}(s') &\triangleq \alpha \log \sum_{s''\in \mathrm{Ch}(s')}\exp\left(\frac{1}{\alpha}Q_{\soft}^{\theta}(s', s'')\right).
    \end{align}
    With our choice of reward function in \cref{eq:reward-shaping-intermediate}, we know that $r(s, s') = -\gE(s\rightarrow s') + \alpha \log P_{B}(s\mid s')$. We can therefore show that the residuals of SQL and FL-DB are proportional to one-another:
    \begin{align}
        \Delta_{\mathrm{SQL}}&(s, s';\theta) = Q_{\soft}^{\theta}(s, s') - \big(r(s, s') + V_{\soft}^{\theta}(s')\big)\\
        &= Q_{\soft}^{\theta}(s, s') + \gE(s \rightarrow s') - \alpha \log P_{B}(s\mid s') - \alpha \log \sum_{s''\in\mathrm{Ch}(s')}\exp\left(\frac{1}{\alpha}Q_{\soft}^{\theta}(s', s'')\right)\\
        &= Q_{\soft}^{\theta}(s, s') - \alpha\log \sum_{s''\in\mathrm{Ch}(s)}\exp\left(\frac{1}{\alpha}Q_{\soft}^{\theta}(s, s'')\right) + \gE(s\rightarrow s') - \alpha \log P_{B}(s\mid s')\nonumber\\
        &\qquad + \alpha \log \sum_{s''\in\mathrm{Ch}(s)}\exp\left(\frac{1}{\alpha}Q_{\soft}^{\theta}(s, s'')\right) - \alpha \log \sum_{s''\in\mathrm{Ch}(s')}\exp\left(\frac{1}{\alpha}Q_{\soft}^{\theta}(s', s'')\right)\\
        &= \alpha \Bigg[\log P_{F}^{\theta}(s'\mid s) - \log P_{B}(s\mid s') + \log \tilde{F}_{\theta}(s) - \log \tilde{F}_{\theta}(s') + \frac{\gE(s\rightarrow s')}{\alpha}\Bigg]\\
        &= -\alpha \Delta_{\mathrm{FL}\textrm{-}\mathrm{DB}}(s, s';\theta).
    \end{align}
    Where we used the correspondence between $\tilde{F}_{\theta}$, $P_{F}^{\theta}$, and $Q_{\soft}^{\theta}$ from \cref{eq:correspondence-sql-fl-db}. This concludes the proof, showing that $\gL_{\mathrm{SQL}}(\theta) = \alpha^{2}\gL_{\mathrm{FL}\textrm{-}\mathrm{DB}}(\theta)$.
\end{proof}
Interestingly, the correspondence in \cref{eq:correspondence-sql-fl-db} between state flows and policy on the one hand (GFlowNet) and the Q-function on the other hand (MaxEnt RL) is exactly the same as the one in \cref{cor:equivalence-db-sql}.

\section{Experimental details}
\label{app:experimental-details}
For all algorithms and all environments (unless stated otherwise), we kept the same exploration schedule, frequency of update of the target network (if applicable), and replay buffer, in order to avoid attributing favorable performance to any of those components. Exploration was done using a naive $\varepsilon$-sampling scheme, where actions were sampled from the current policy with probability $1 - \varepsilon$, and uniformly at random with probability $\varepsilon$. All algorithms were trained over 100k iterations, and $\varepsilon$ was decreasing over the first 50k from $\varepsilon=1$ to $\varepsilon=0.1$. All algorithms use a target network, except TB/PCL, and the target network was updated every 1000 iterations. We used a simple circular buffer with 100k capacity for all algorithms (TB/PCL using  buffer of trajectories, as opposed to a buffer of transitions). Hyperparameter search was conducted for all environments over the learning rate alone of all the networks, using a simple grid search.

\subsection{Probabilistic inference over discrete factor graphs}
\label{app:details-treesample}
Given a factor graph with a fixed structure, over $d$ random variables $(V_{1}, \ldots, V_{d})$, the objective is to sample a complete assignment $\vv$ of these variables from the Gibbs distribution $P(\vv) \propto \exp(-\gE(\vv))$, where the energy function is defined by
\begin{equation}
    \gE(v_{1}, \ldots, v_{d}) = -\sum_{m=1}^{M}\psi_{m}(\vv_{[m]}),
    \label{eq:treesample-energy}
\end{equation}
where $\psi_{m}$ is the $m$th factor in the factor graph, and $\vv_{[m]}$ represents the values of the variables that are part of this factor. The factors $\psi_{m}$ are fixed, and randomly generated using the same process as \citet{buesing2020approximate}. Each variable $V_{i}$ is assumed to be discrete and can take one of $K$ possible values. Overall, this means that the number of elements in the sample space is $K^{d}$.

A state of the soft MDP is a (possibly partial) assignment of the values, \emph{e.g.}, $(0, \cdot, 1, 0, \cdot, \cdot)$, where $\cdot$ represents a variable which has not been assigned a value yet. The initial state $s_{0} = (\cdot, \cdot, \ldots, \cdot)$ is the state where no variable has an assigned value. An action consists in picking one variable that has not value (with a $\cdot$), and assigning it one of $K$ values. The process terminates when all the variables have been assigned a value, meaning that all the complete trajectories have length $d$. This differs from the MDP of \citet{buesing2020approximate}, since they were assigning the values of the variables in a fixed order determined ahead of time, making the MDP having a tree structure.

\citet{buesing2020approximate} defined an intermediate reward function corresponding to a decomposition of the energy \cref{eq:treesample-energy} as $\gE(\vv) = \sum_{t=0}^{d-1} \gE(s_{t} \rightarrow s_{t+1})$, where $\vv$ is the terminating state of a complete trajectory $(s_{0}, s_{1}, \ldots, s_{d}, s_{f})$ (\emph{i.e.}, $s_{d} = \vv$), and with the partial energies defined as
\begin{equation}
    \gE(s_{t} \rightarrow s_{t+1}) = -\sum_{m=1}^{M}\psi_{m}(\vv_{[m]})\mathbbm{1}(i \in [m]\ \mathrm{\&}\ \vv_{[m]} \subseteq s_{t+1}),
    \label{eq:treesample-partial-energy}
\end{equation}
if the transition $s_{t} \rightarrow s_{t+1}$ corresponds to assigning the value of a particular variable $V_{i}$. In other words, the partial energy corresponds to computing all the factors as soon as all the necessary information is available (\emph{i.e.}, all the values of the input variables of the factors have been assigned, and $V_{i}$ that was just assigned a value is one of the input variables of the factors).

\subsection{Structure learning of Bayesian Networks}
\label{app:details-dag-gfn}
Given $d$ continuous random variables $(X_{1}, \ldots, X_{d})$, a DAG $G$ and parameters $\theta$, a Bayesian Network represents the conditional independences in the joint distribution based on the structure of $G$
\begin{equation}
    P(X_{1}, \ldots, X_{d}; \theta) = \prod_{j=1}^{d}P\big(X_{j}\mid \mathrm{Pa}_{G}(X_{j}); \theta_{k}\big),
\end{equation}
where $\mathrm{Pa}_{G}(X_{j})$ represents the parent variables of $X_{j}$ in $G$. We assume that all conditional distributions are linear-Gaussian. The objective of Bayesian structure learning is to approximate the posterior distribution over DAGs: $P(G\mid \gD) \propto P(\gD\mid G)P(G)$, where $P(\gD\mid G)$ is the marginal likelihood and $P(G)$ is a prior over graph, assumed to be uniform here. Our experiments vary in the way the marginal likelihood is computed, either based on the BGe score \citep{geiger1994bge}, or the linear Gaussian score \citep{nishikawa2022vbg}. We followed the experimental setup of \citet{deleu2022daggflownet}, where data is generated from a randomly generated ground truth Bayesian network $G^{*}$, sampled using an Erd\"os-R\'{e}nyi scheme with on average 1 edge per node. We generated 100 observations from this Bayesian Network using ancestral sampling. We repeated this process for 20 different random seeds.

A state of the soft MDP corresponds to a DAG $G$ over $d$ nodes, and the initial state is the empty graph over $d$ nodes. An action consists in adding a directed edge between two nodes, such that it is not already present in the graph, and it doesn't introduce a cycle, garanteeing that all the states of the MDP are valid acyclic graphs; there is a special action indicating whether we want to terminate and transition to $s_{f}$. Since all the states are valid DAGs, this means that all the states of the MDP are terminating, and we can use $r(G_{t}, G_{t+1}) = \gE(G_{t}) - \gE(G_{t+1})$ as the intermediate reward, with the appropriate energy function $\gE(G) = \log P(G_{0}, \gD) - \log P(G, \gD)$. \citet{deleu2022daggflownet} showed that the difference in energies can be efficiently computed using the delta-score \citep{friedman2003ordermcmc}.

\subsection{Phylogenetic tree generation}
\label{app:details-phylogfn}
We consider the environment introduced by \citet{zhou2024phylogfn}, where phylogenertic trees used for the analysis of the evolution of a group of $d$ species are generated, according to a parsimonious criterion. Indeed, trees encoding few mutations are favored as they are more likely to represent realistic relationships. Given a tree $T$, whose nodes are the species of interest, the target distribution is given by
\begin{equation}
    P(T) \propto \exp(-M(T\mid \mY) / C),
\end{equation}
where $C = 4$ is a fixed constant, and $M(T\mid \mY)$ is the total number of mutations (also known as the \emph{parsimony score}), based on the biological sequences $\mY$ associated with each species; note that for convenience, we treat the energy function as being $\gE(T) = M(T\mid \mY)/C$ (with $\alpha = 1$). We used 6 out of the 8 datasets considered by \citet{zhou2024phylogfn}; the statistics of the datasets are recalled in \cref{tab:datasets-phylogfn} for completeness.

\begin{table}[ht]
    \centering
    \caption{Statistics of the datasets used in the phylogenetic tree generation task. ``Length'' represents the length of the biological sequence (\emph{e.g.}, the DNA sequence) of each species. See \citep{zhou2024phylogfn} for details and references about these datasets. The number of species represents the number of nodes in the tree, and is a measure of complexity of the task.}
    \begin{tabular}{lll}
    \toprule
    Dataset & \# Species ($d$) & Length\\
    \midrule
    DS1 & 27 & 1949\\
    DS2 & 29 & 2520\\
    DS3 & 36 & 1812\\
    DS4 & 41 & 1137\\
    DS5 & 50 & 378\\
    DS6 & 50 & 1133\\
    \bottomrule
    \end{tabular}
    \label{tab:datasets-phylogfn}
\end{table}

A state of the soft MDP is a collection of trees over a partition of all the species (the leaves of the trees are species), where the initial state corresponds to $d$ trees with a single node (leaf), one for each species in the group. An action consists in picking two trees, and merging them by adding a root. The process terminates when there is only one tree left in this collection, meaning that all complete trajectories have the same length $d-1$. The size of the sample space is $(2d - 3)!!$ (for $d \geq 2$).

To decompose the energy function into $\gE(T) = \sum_{t=0}^{d-2}\gE(s_{t} \rightarrow s_{t+1})$, where $T$ is the terminating state of a complete trajectory $(s_{0}, s_{1}, \ldots, s_{d-1}, s_{f})$, we can use the observation made by \citet{zhou2024phylogfn} that the total number of mutations $M(T\mid \mY)$ can be decomposed as the sum of (1) the number of mutations are the root of the tree, and (2) the total number of mutations in the left and right subtrees. Therefore, we can use the number of mutations at the new root of the tree constructed during the transition $s_{t} \rightarrow s_{t+1}$ as the intermediate energy $\gE(s_{t}\rightarrow s_{t+1})$ (appropriately rescaled by $C$), which can be computed using the Fitch algorithm \citep{zhou2024phylogfn}.

In addition to \cref{fig:phylogfn}, we also provide a complete view of the correlation between the terminating state log-probabilities and the returns for all algorithms and all datasets in \cref{fig:phylo_grid1_3,fig:phylo_grid4_6}. Each point corresponds to a tree sampled using the terminating state distribution found by the corresponding algorithm.

\begin{figure}[htbp]
    \centering
    \vspace*{-2em}
    \includegraphics[width=.8\textwidth]{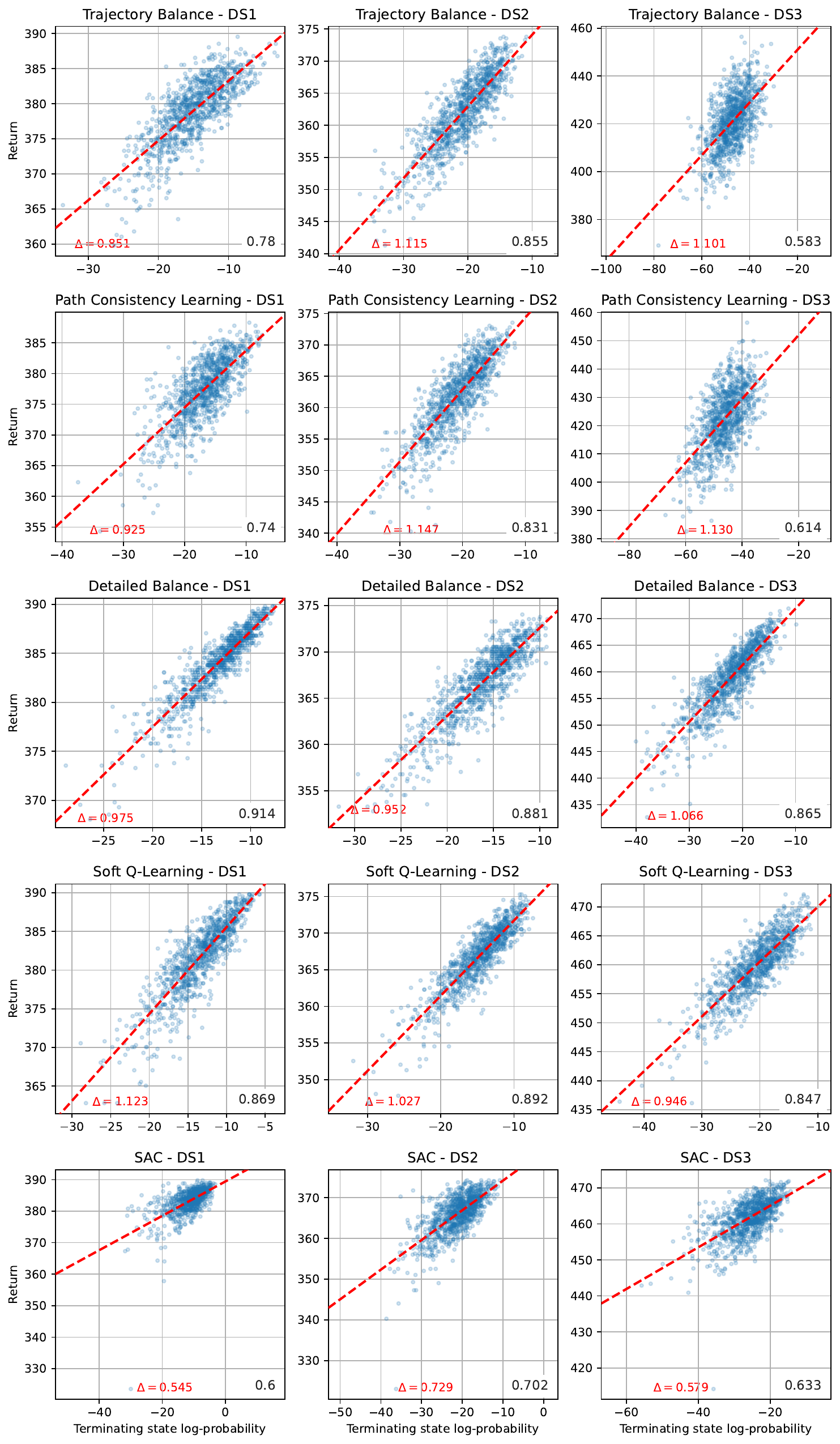}
    \caption{Correlation plots for all algorithms. Rows from top to bottom: Trajectory Balance (TB), Path Consistency Learning (PCL), Detailed Balance (DB), Soft Q-Learning (SQL) and SAC. Columns showing DS1 to DS3 from left to right.}
    \label{fig:phylo_grid1_3}
\end{figure}

\begin{figure}[htbp]
    \centering
    \vspace*{-1em}
    \includegraphics[width=0.8\textwidth]{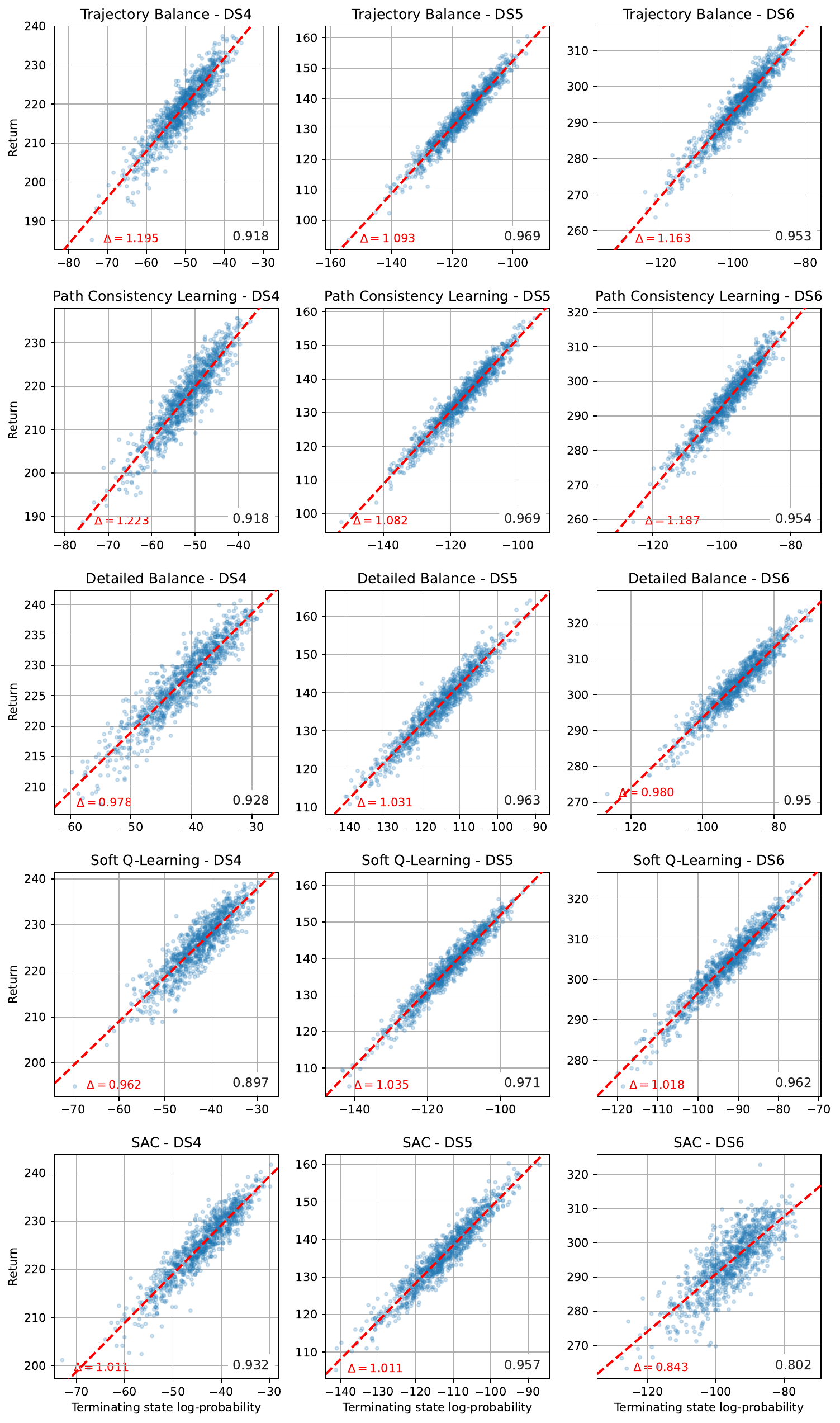}
    \caption{Similar plots as \cref{fig:phylo_grid1_3} here columns presenting DS4, DS5 and DS6 from left to right. }
    \label{fig:phylo_grid4_6}
\end{figure}

\end{document}